\newcommand{\mathfont}[1] {#1}
\newcommand{\displayfont}[1] {{\fontsize{8pt}{9pt}\selectfont #1}}
\newcommand{\inlinefont}[1] {{\fontsize{8.5pt}{9pt}\selectfont #1}}
\newcommand{\true}{global\xspace}
\newcommand{\pr}{\mathrm{Pr}}
\newcommand{\bw}{{\bf w}}
\newcommand{\cS}{\mathcal{S}}
\begin{document}
\title{Stochastic Gradient Descent for Relational Logistic Regression via \\
Partial Network Crawls}
\author{
Jiasen Yang{\footnotemark[1]}
\quad Bruno Ribeiro{\footnotemark[2]}
\quad Jennifer Neville{\footnotemark[1]\footnotemark[2]} \\
Departments of Statistics\footnotemark[1] and
Computer Science\footnotemark[2] \\
Purdue University, West Lafayette, IN \\
\url{{jiaseny, ribeirob, neville}@purdue.edu} \\
}

\maketitle


\begin{abstract}

Research in statistical relational learning has produced a number of methods for learning relational models from large-scale network data.
While these methods have been successfully applied in various domains,
they have been developed under the unrealistic assumption of full data access.
In practice, however, the data are often collected by crawling the network,
due to proprietary access, limited resources, and privacy concerns.
Recently, we showed that the parameter estimates for
relational Bayes classifiers computed from network samples collected by existing network crawlers can be quite inaccurate, and developed a \emph{crawl-aware} estimation method for such models \citep{YanRibNev17}.
In this work, we extend the methodology to learning relational logistic
regression models via stochastic gradient descent from partial network crawls,
and show that the proposed method yields accurate parameter estimates and confidence intervals.

\end{abstract}


\section{Introduction}\label{sec:intro}

There has been a great deal of interest in learning statistical models
that can represent and reason about {\em relational} dependencies (see \eg , \citeauthor{GetoorTaskar07}, \citeyear{GetoorTaskar07}).
For example, political views are often correlated among friends in social networks.
While much work has been done
in the relational learning community to develop models and algorithms for estimation and inference in networks, a primary assumption underlying these works is that a {\em full} network is available for learning.
With access to the full network, one could perform \emph{stochastic gradient descent} (SGD) in the usual manner, and the learned parameters will
asymptotically converge to the desired parameter estimates~\citep{Robbins1951,bottou2005line,bach2014adaptivity}.

However, the network datasets used to study relational models are typically {\em samples} of a larger network.
In particular, it is often the case that researchers do not have
\emph{random access} to the full network structure and that
sampling is only possible via repeated crawling from a node to
its neighbors---a procedure that tends to result in biased samples~\citep{kurant2011towards,Ribeiro2012}.
As a result, naively performing SGD using these partial crawls may also
suffer from unknown biases.
Part of this work shows that such crawled data could indeed lead to biased
parameter estimates in real-world scenarios.

Recently, we showed that estimating the parameters in a
\emph{relational Bayes classifier} (RBC) \citep{NevJenGal03,MacPro07}
using data from widely used network sampling
methods---such as \emph{snowball sampling}, \emph{forest-fire} \citep{LesFal06}, \emph{random walks}, and \emph{Metropolis-Hastings random walk} \citep{GjoKurButMar09}---could lead to biased parameter estimates \citep{YanRibNev17}.
We then corrected for such estimation bias in the RBC by exploiting a general crawling method introduced by \cite{AvrRibSre15} that produces unbiased estimates with statistical guarantees.

In this work, we extend the methodology to
develop a crawl-based SGD procedure for \emph{relational logistic regression} (RLR).
The proposed method is guaranteed to obtain unbiased estimates of the log-likelihood function and its gradients over the full network (with finite variance), which allows SGD to converge to the correct parameter values for sufficiently small learning rates~\citep{Robbins1951}.
Furthermore, we show how to construct confidence intervals of the estimated parameters,
which enables practitioners to assess the statistical significance of
features in the model.

\vspace{-1pt}
\subsubsection*{Summary of Contributions}
\vspace{-2pt}

\begin{itemize}[leftmargin=*]
\itemsep0em
\item We derive a crawl-based SGD method for learning the RLR model from partial
network crawls, and prove that the proposed method yields unbiased estimates
of the log-likelihood function and its gradients over the full network.
\item We demonstrate how to construct confidence intervals of the estimated parameters by exploiting the independence properties of the network samples.
\item We conduct experiments on several large network datasets, and demonstrate
that the proposed methodology achieves consistently lower error in parameter
estimates and higher coverage probabilities of confidence intervals.
\end{itemize}


\section{Problem Definition}\label{sec:problem}

The goal of this work is to develop a stochastic estimation algorithm for
the relational logistic regression (RLR) model in large social networks under an access-restricted scenario.
In particular, we are interested in accurately estimating model parameters in order to effectively assess the importance of relational features involving the neighbors of a node.%
\footnote{In \cite{YanRibNev17}, we also referred to the parameters in a relational model
as {\em peer effects}.}

We assume that random access to the full network structure is not
available, and that the network can only be accessed via crawling.
Specifically, we assume (\emph{i}) the availability of a seed node in the network, (\emph{ii}) the ability to query for the attributes of a sampled node, and (\emph{iii}) the ability to transition to neighbors of a sampled node.

Given such an access pattern,
and assuming that the full network cannot be crawled,
the task is to accurately estimate the model parameters by learning an RLR model
over the sampled network.
If we refer to the estimates that a learning algorithm would obtain from the
full network as \emph{\true} estimates and those from the sampled network as
\emph{sample} estimates, then the ideal method
should produce
(\emph{i}) unbiased sample estimates (\wrt the \true estimates), and
(\emph{ii}) accurate assessments of the uncertainty associated with the sample estimates (\eg, confidence intervals).


\section{Background and Related Work}\label{sec:background}
Denote a graph
by $G = (V, E)$, where $V$ is the set of vertices and $E\subseteq V\times V$ is the set of edges.
For a node $v\in V$, denote its neighbors by $\Ncal_v = \{u\in V: (u, v)\in E\}$ and its degree by $d_v = |\mathcal{N}_v|$.
Finally, let $\mathds{1}\{\cdot\}$ denote the indicator function.

\subsubsection{Network Sampling Algorithms}\label{sec:background_sampling}

We note that under a crawl-based scenario,
any technique involving random node/edge selection
will be infeasible.

\emph{Snowball sampling (BFS)}
traverses the network via a breadth-first search.
\emph{Forest fire (FF)} \citep{LesFal06}
samples (``burns'') a random fraction of a node's neighbors,
and repeats this process recursively for each ``burned'' neighbor.
\emph{Random walk sampling (RW)}
performs a random walk on the network by transitioning
from the current node to a randomly selected neighbor at every step.
%
\emph{Metropolis-Hastings random walk (MH)}
\citep{GjoKurButMar09}
sets the transition probability from node $u$ to $v$ as
$\Pvec_{u, v}^\mathsf{MH} = \min(\nicefrac{1}{d_u}, \nicefrac{1}{d_v})$
if $v \in \Ncal_u$ and
$1 - \sum_{w\neq u}\Pvec_{u, w}^\mathsf{MH}$
if $v = u$, which yields a uniform stationary distribution over nodes.

\emph{Random walk tour sampling (TS)} \citep{AvrRibSre15}
is a recently proposed method that exploits the regenerative properties of
random walks.
Given an initial seed node, the algorithm first performs a short random walk
to collect a set of seed nodes $\Scal \subseteq V$,
and then proceeds to sample a sequence of random walk \emph{tours}.
Specifically, the $k$-th random walk tour starts from a sampled node
\inlinefont{$v_1^{(k)}\in \Scal$} and transitions through a sequence of nodes \inlinefont{$v_2^{(k)},\,\dots,v_{\xi_k-1}^{(k)}\in V\backslash \Scal$} until it returns to a node
\inlinefont{$v_{\xi_k}^{(k)} \in \Scal$}.
The algorithm repeats this process to sample $m$ such tours, denoted
\inlinefont{$\Dcal_m (\Scal) = \{(v_1^{(k)},\ldots,v_{\xi_k}^{(k)})\}_{k=1}^m$}.
Since the successive returns to a seed node in $\Scal$ act as renewal epochs,
the renewal reward theorem~\citep{Bremaud99} guarantees that sample statistics computed
from each tour will be independent.

\subsubsection{Relational Learning Models} \label{sec:relmodels}

Relational learning models (see \eg, \citeauthor{GetoorTaskar07}, \citeyear{GetoorTaskar07}) extend traditional supervised learning methods to the relational domain,
in which training examples (such as nodes in a social network) are no longer
\iid~(independent and identically distributed).

\emph{Relational logistic regression (RLR)} (see \eg, \citeauthor{KazBucKerNat14}, \citeyear{KazBucKerNat14})
predicts the target class of a node using \emph{aggregated} features
constructed from the class label and attributes of its neighbors.
A typical aggregation function is \emph{proportion}, which takes the proportion
of neighbors that possess a particular class label or feature value.
Let $\phi_v\in\RR^d$ be a set of aggregated features for node $v \in V$ that
involve either the attributes of $v$ or the attributes/class label of its neighbors $\mathcal{N}_v$.
Let $y_v \in\{1, \dots, H\}$ be the class label of node~$v$.
The RLR model employs the \emph{soft-max} function to predict~$y_v$:

\vspace{-2mm}%
\displayfont{
\begin{align}
\nonumber \pr( y_v | \bw_1,\ldots,\bw_H, \phivec_v ) = \frac{ \exp\left( \bw^\ts_{y_v} \phivec_v \right)}{ \sum_{h=1}^H \exp(\bw^\ts_h \phivec_v)}\, ,
\end{align}
}%
where $\bw_c\in\RR^d$ are the weights for class $c \in \{1, \dots, H\}$
that need to be estimated from the network.

\subsubsection{Related Work}

In our previous work \citep{YanRibNev17}, we showed that the class priors and conditional probability distributions (CPDs) in a relational Bayes classifier (RBC) can be
unbiasedly estimated under the same crawl-based scenario that we consider here.
In this work, we extend the methodology to the estimation of RLR models using a crawl-based SGD method.
Note that RLR forms a more expressive model family which poses a more challenging estimation task---in fact, the CPDs in an RBC can be implicitly represented by features in an RLR model.
Furthermore, statistical significance tests (\eg, $\chi^2$ and deviance tests)  for the parameter estimates in an RLR model have been extensively studied
(see \eg, \citeauthor{Agresti02}, \citeyear{Agresti02}) in the literature,
which offer tools for feature selection and model comparison.


\section{Proposed Methodology}\label{sec:methods}

Given an unobserved network $G = (V, E)$,
the tasks are (\emph{i}) to
estimate the parameters $\thetab$ in a relational model
by crawling the network $G$ from an initial set of seed nodes $\Scal\subseteq V$,
and
(\emph{ii}) to assess the uncertainty associated with the estimates~$\widehat{\thetab}$.
To this end, we outlined the following procedure for crawl-based estimation of relational models in large-scale networks \citep{YanRibNev17}:
\begin{description}[font=\normalfont\emph]
    \itemsep=0pt
    \item[Crawling] Crawl the network using a sampling method.
    \item[Estimation] Estimate parameters from the crawled network.
    \item[Calibration] Compute confidence intervals for the estimates.
\end{description}

For the crawling phase, we shall employ the random walk tour sampling algorithm
(see Section~\ref{sec:background}).
Next, we discuss the details of our proposed stochastic estimation and calibration methodology for relational logistic regression (RLR).

\subsubsection{Relational Model Estimation}\label{sec:srl}

Recall that RLR utilizes a multinomial logistic regression model to define conditional probability of the label of node $v \in V$ given its observed attributes and the attributes/class label of its neighbors $\mathcal{N}_v$.
Let $\phivec_v\in\RR^d$ be a set of aggregated features for node $v \in V$ that
is computed from its neighbors $\mathcal{N}_v$.
In practice, to reduce the node-querying cost, we can estimate the
aggregated features $\phivec_v$ stochastically by taking a uniform
sample of the neighbors $\Ncal_v$.
The log-likelihood for the RLR model is given by

\vspace{-2mm}%
\displayfont{
\begin{align}
\nonumber \Lcal(\bw_1,\ldots,\bw_H)
& = \sum_{v \in V} \log \pr(y_v | \bw_1,\ldots,\bw_H, \phivec_v) \\
& = \sum_{v \in V} \left[ \bw^\ts_{y_v} \phivec_v - \log\left( \sum_{h=1}^H \exp(\bw^\ts_h \phivec_v) \right) \right], \label{eq:rlr-loglik}
\end{align}
}%
where $\bw_c\in\RR^d$ are the weights for class $c \in \{1, \cdots, H\}$.

In general, we do not have any guarantees on the quality of the parameter
estimates obtained from a crawled network.
However, if the sample were collected using the tour sampling algorithm,
we propose applying Theorem~\ref{thm:rlr} to accurately estimate the parameters in an RLR model.

\begin{theorem}[RLR crawl-based SGD]
\label{thm:rlr}
Given the sampled random walk tours
\mathfont{$\Dcal_m (\Scal) = \{(v_1^{(k)},\ldots,v_{\xi_k}^{(k)})\}_{k=1}^m$},
the following estimates for the log-likelihood of Eq.\,\eqref{eq:rlr-loglik}
and its gradients are unbiased:
%
\displayfont{
\begin{align}
\widehat{\Lcal} (\bw_1,\ldots,\bw_H)
& \defeq \frac{d_{\Scal}}{m}  \sum_{k=1}^m
 \sum_{t=2}^{\xi_k-1} \frac{g(v^{(k)}_t)}{d_{v^{(k)}_t}}
+
\sum_{v \in \cS} g(v) \label{eq:RLR-loglik} \\
\nabla_{\bw_j} \widehat{\Lcal} (\bw_1,\ldots,\bw_H)
& \defeq \frac{d_{\Scal} }{m} \sum_{k=1}^m
\sum_{t=2}^{\xi_k-1} \frac{g'_j(v^{(k)}_t)}{d_{v^{(k)}_t}}
+
\sum_{v \in \cS} g'_j(v)
 \label{eq:RLR-loglikgrad} \, ,{}
\end{align}
}%
where
$d_{\Scal} = |((\Scal\times V) \cap E) \backslash (\Scal\times\Scal)|$ denotes
the total number of outgoing edges from the seed nodes, and

\vspace{-3mm}%
\displayfont{
\begin{align*}
g(v)
& \defeq \bw^\ts_{y_v} \phivec_v - \log\left( \sum_{h=1}^H \exp(\bw^\ts_h \phivec_v) \right) \\
g'_j(v)
& \defeq \left( \mathds{1}\{y_{v} = j\} - \frac{\exp(\bw_j^\ts \phivec_{v})}{\sum_{h=1}^H \exp(\bw_h^\ts \phivec_{v})} \right) \phivec_{v}.
\end{align*}
}%
\end{theorem}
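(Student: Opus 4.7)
The plan is to reduce both assertions to a single unbiasedness identity for random walk tour estimators, and then instantiate it once with $f = g$ (for the log-likelihood) and once with $f = g'_j$, coordinate-wise, for the gradient. The key ingredient is the regenerative-process identity underlying the tour sampling method of Avrachenkov, Ribeiro, and Sreedharan (2015): if we collapse the seed set $\Scal$ into a single super-node and run a simple random walk on the resulting merged graph, then for any function $f : V \setminus \Scal \to \RR$ with finite expectation, a single tour satisfies
\begin{equation*}
\Ex{\sum_{t=2}^{\xi-1} \frac{f(v_t)}{d_{v_t}}} = \frac{1}{d_\Scal} \sum_{v \in V \setminus \Scal} f(v),
\end{equation*}
where $d_\Scal$ is the degree of the super-node and the inner sum ranges over the interior nodes of one tour. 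This is the standard tour-sampling identity obtained from the renewal reward theorem together with the fact that the random walk on the merged graph has stationary distribution proportional to $d_v$ at each $v \notin \Scal$ and proportional to $d_\Scal$ at the super-node. Averaging over the $m$ i.i.d.\ tours, multiplying through by $d_\Scal$, and adding the deterministic term $\sum_{v \in \Scal} f(v)$ then produces an unbiased estimator of $\sum_{v \in V} f(v)$; this is exactly the template of Eqs.~\eqref{eq:RLR-loglik} and~\eqref{eq:RLR-loglikgrad}.

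Next I would verify that the two estimators in the theorem are genuine instances of that template. For the log-likelihood, $\Lcal = \sum_{v \in V} g(v)$ holds by construction in Eq.~\eqref{eq:rlr-loglik}, so taking $f = g$ yields Eq.~\eqref{eq:RLR-loglik} immediately. For the gradient, I would differentiate $g(v) = \bw_{y_v}^\ts \phivec_v - \log \sum_h \exp(\bw_h^\ts \phivec_v)$ with respect to $\bw_j$; standard softmax differentiation gives $(\mathds{1}\{y_v = j\} - p_j(v))\,\phivec_v$, where $p_j(v) = \exp(\bw_j^\ts \phivec_v)/\sum_h \exp(\bw_h^\ts \phivec_v)$, and this matches the definition of $g'_j(v)$. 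Since differentiation commutes with the finite sum over $V$, we get $\nabla_{\bw_j}\Lcal = \sum_{v \in V} g'_j(v)$, and the tour identity applied coordinate-wise (using linearity of expectation for each component of $\phivec_v$) produces Eq.~\eqref{eq:RLR-loglikgrad}.

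The main conceptual step, and the one I would take the most care with, is justifying the tour identity itself: it rests on $\Scal$ being a renewal atom for the merged chain, so that successive tours are i.i.d.\ and the expected sum of $f(v_t)/d_{v_t}$ over a single tour reduces to the stated stationary-measure ratio via the renewal reward theorem. One subtlety worth flagging is that $\phivec_v$ depends on the neighbors $\Ncal_v$ and, as the paper notes, may be estimated in practice by subsampling those neighbors; to apply the above argument directly one either assumes $\Ncal_v$ is fully queried when $v$ is visited, or one conditions on the tour and invokes the tower property with an independent neighbor subsample, either of which preserves unbiasedness. Everything else in the proof is bookkeeping.
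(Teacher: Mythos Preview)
Your proposal is correct and follows essentially the same route as the paper: split $\Lcal$ into the seed contribution $\sum_{v\in\Scal} g(v)$ and the non-seed part, collapse $\Scal$ into a super-node, and invoke the renewal reward theorem on the merged random walk to obtain the tour identity, then apply it verbatim to $g$ and to $g'_j$. The only cosmetic difference is that the paper casts the tour identity in terms of an edge function $f(u,v)=g(v)/d_v$ and the edge sum $\sum_{(u,v)\in E} f(u,v)$, whereas you state the equivalent node-based form $\Ex{\sum_{t=2}^{\xi-1} f(v_t)/d_{v_t}} = d_\Scal^{-1}\sum_{v\in V\setminus\Scal} f(v)$ directly; since the paper's $f(u,v)$ does not actually depend on $u$, the two formulations coincide and your version is arguably the more streamlined one for this application.
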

\begin{proof}
We defer the proof to the Appendix.
\end{proof}
We can learn the weights of an RLR model by minimizing the negative
log-likelihood via stochastic gradient descent using the
estimates of Eq.\,\eqref{eq:RLR-loglikgrad}.
While Theorem~\ref{thm:rlr} shows that the log-likelihood over the full network
and its gradients can be unbiasedly estimated, we note that the current result
does not directly imply bounds on the approximation error of the resulting parameter estimates.
Empirically, our experiments suggest that the parameter estimates are quite accurate.

\subsubsection{Calibration of Estimated Parameters}\label{sec:calibration}

To construct
confidence intervals for the estimated parameters,
we propose to utilize bootstrap resampling \citep  {Efron79}.
In fact, this step can be performed as sampling
progresses---by monitoring the confidence intervals, the
practitioner can determine adaptively if more samples need to be collected.

For tours sampling, since the estimates computed from each tour 
are independent,
we can perform bootstrapping by treating the tours individually,
sample with replacement, compute an estimate over the bootstrap sample, and
repeat this process.
We can then compute empirical confidence intervals over the bootstrap estimates.
Algorithm~\ref{alg:bootstrap} describes the bootstrapping algorithm we use
to compute confidence intervals for the parameters $\thetab$ in a general relational model.
For convenience, denote the nodes sampled in the $k$-th tour by
\mathfont{$\Tcal_k \defeq \{v_t^{(k)}\}_{t=1}^{\xi_k}$}.
\begin{algorithm}[!htbp]
  \caption{Computation of Confidence Intervals}\label{alg:bootstrap}
  \KwIn{The sampled tours
    \mathfont{$\Dcal_m (\Scal) = \big\{\Tcal_k\big\}_{k=1}^m$}; and \\
    \qquad\quad the number of bootstrap samples $B$.}
  \KwOut{A $100\,(1-\alpha)\%$ confidence interval for $\theta$.}

  $\Theta \leftarrow \varnothing$

  \For{$i = 1,2,\cdots,B$} {
    $\Dcal^{(i)}_m \leftarrow \varnothing$ \\
    \For{$j = 1, 2, \cdots, m$} {
        $k \leftarrow$ Random integer in $\{1, 2, \cdots, m\}$ \\
        $\Dcal^{(i)}_m \leftarrow \Dcal^{(i)}_m \cup \{\Tcal_j\}$
    }
    $\widehat{\theta}_i \leftarrow$ Estimate of $\theta$
        computed using $\Dcal^{(i)}_m$ \\
    $\Theta \leftarrow \Theta \cup \{ \widehat{\theta}_i \} $
  }
  $Q_{\nicefrac{\alpha}{2}} \leftarrow$ The $100\,(\nicefrac{\alpha}{2})$-percentile of $\Theta$ \\
  $Q_{1-\nicefrac{\alpha}{2}} \leftarrow$ The $100\,(1-\nicefrac{\alpha}{2})$-percentile of $\Theta$ \\
 \Return{$(Q_{\nicefrac{\alpha}{2}}, Q_{1-\nicefrac{\alpha}{2}})$}
\end{algorithm}
Among all the crawling methods under examination,
tours sampling is the only approach capable of producing
theoretically justified confidence intervals via resampling.
This is due to the fact that BFS, FF, RW, and MH do not provide a list of
node/edge samples that yield \iid estimates of the model parameters.


\section{Experimental Evaluation}\label{sec:exp}

\subsubsection{Dataset Description}

We perform experiments on five different attributed network datasets.
As a preprocessing step, we take the giant component of all networks.
Table~\ref{tab:friendster} shows the summary statistics for each
network after processing.

\begin{table*}[!htb]
\centering
\caption{Summary of Network Statistics}
\vspace{-8pt}
\fontsize{8pt}{10pt}\selectfont
\label{tab:friendster}
\begin{tabular}{rrrp{1.2cm}p{9.8cm}}
\toprule
Dataset    & $|V|$ & $|E|$ & Attributes & Class Prior Distribution \\
\midrule
Facebook   &  14,643 & 336,034 & \emph{PoliticalView} &   Conservative: 28.40\%, Otherwise: 71.60\%                 \\
Friendster-Large & 3,146,011 & 47,660,702 & \emph{Age} & [16, 26): 35.02\%, [26, 28): 16.27\%,
                                  [28, 32): 22.56\%, [32: 100): 26.15\%  \\
           &  &  & \emph{Gender} & Female: 46.99\%, Male: 53.01\% \\
           &  &  & \emph{Status} & Single: 67.21\%, In a Relationship: 19.06\%, Married: 12.60\%, Domestic Partner: 1.13\% \\
Friendster-Small & 1,120,930 & 19,342,990 & \emph{Age}  & [16, 26): 44.53\%, [26, 28): 14.96\%, [28, 32): 21.28\%, [32, 100): 19.23\%  \\
           &  &  & \emph{Gender} & Female: 45.39\%, Male: 54.61\% \\
           &  &  & \emph{Status} & Single: 62.01\%, In a Relationship: 20.29\%,
                                   Married: 16.50\%, Domestic Partner: 1.21\% \\
           &  &  & \emph{Zodiac} & Capricorn: 7.74\%, Virgo: 8.27\%, Libra: 8.60\%,
                                   Gemini: 8.11\%, \\
           &  &  &               & Scorpio: 8.31\%, Leo: 8.88\%, Taurus: 8.79\%,
                                   Sagittarius: 8.58\%, \\
           &  &  &               & Cancer: 8.15\%, Aquarius: 8.97\%, Pisces: 7.83\%, Aries: 7.77\% \\
Communications  & 855,172 & 5,269,278 & \emph{Comm.} & Yes: 6.09\%, No: 93.91\% \\
Computers  & 855,172 & 5,269,278 & \emph{Computers} & Yes: 17.34\%, No: 82.66\% \\
\bottomrule
\end{tabular}
\vspace{-10pt}
\end{table*}

\emph{Facebook} is a snapshot of the Purdue University
Facebook network consisting of users who have listed their political views
(whether or not they declare to be conservative).

\emph{Friendster-Large (Fri.-L.)} and \emph{Friendster-Small (Fri.-S.)} are
processed from
the entire Friendster social network crawl \citep{MouNaiRibNev17}.
For \mbox{Fri.-L.}, we take the subgraph containing all users with
\emph{age}, \emph{gender}, and \emph{marital status} listed in their profiles.
For Fri.-S., we also include \emph{zodiac}.
We discretized the \emph{age} attribute into four interval classes.

The observations we make are not restricted to social networks.
We also experiment on two citation networks,
\emph{Communications (Comm.)} and \emph{Computers}, both
constructed from the NBER patent citations dataset \citep{HalJafTra01}.%
\footnote{While the edges in the citation networks are directed,
we treat them as undirected edges in the experiments for simplicity.}
The label of each patent indicates whether it was filed in
a category related to comm.~(computers).

\subsubsection{Experiment Setup}

In each run of the simulation, we randomly select 50\%
nodes in the network to have observed labels, and the task is to infer the labels of the remaining nodes.
Next, a labeled node is randomly selected 
as the seed node to initiate crawling for all the sampling methods.%
\footnote{In practice, one could always avoid querying unlabeled
nodes; thus, we set all methods to crawl directly on the labeled subgraph.}
In practice, querying a node will be associated with a certain cost, and we
strictly control for the number of unique node-queries.
For each method, we keep track of the parameter estimates and bootstrap
confidence intervals as crawling progresses.
%
We perform 10 runs of the simulation,
and report the average performance and standard errors for all methods.

\subsubsection{Evaluation Criteria}
We measure the performance of the various network crawling methods
in terms of:
\begin{itemize}
  \itemsep=0pt
  \item The quality of the RLR parameter estimates learned from a network
    sample crawled  using that method. Specifically, we measure
    (\emph{i}) the mean-absolute-error (MAE) between
            the \emph{sample} estimate computed from the crawled sample and
            the \emph{global} estimate computed from the entire graph, and %
    (\emph{ii})
            the root-mean-square-error (RMSE) of the predicted class
            probabilities for the unlabeled nodes%
            \footnote{
                When predicting the class label for an unlabeled node, in addition to
                the attributes and class label of its neighbors, the attributes
                (but not the class label) of the unlabeled node
                are also available.}
            using an RLR model equipped with the \emph{sample} estimates.
  \item The quality of the confidence intervals obtained from the
    crawled sample, as measured by the coverage probability.
\end{itemize}

\subsubsection{Evaluation of Estimation Performance}

Figure~\ref{fig:estimation-rlr} shows the quality of the estimated parameters
vs. the proportion of queried nodes as crawling progresses.%
\footnote{For the Friendster results, the parenthesized attribute
denotes the class label used for the prediction task, while all other attributes are used as features.
The solid line in the RMSE plots correspond to the prediction error obtained
using the \emph{global} estimates.
The plots are jittered horizontally to prevent the error bars from overlapping.}
We observe that across all datasets,
tour sampling (TS) consistently
achieves smaller MAE in the estimated model parameters as well as lower RMSE in the
predicted class probabilities.
Also note that MH and RW usually outperfom FF and BFS.

For numerical stability reasons, we utilized either $\ell_1$ or $\ell_2$ regularization in our experiments.
Since our interest is in accurately estimating model parameters, we set the regularization parameter to be very small~($10^{-3}$) in both cases.
Not surprisingly, $\ell_1$ regularization results in sparser parameter estimates.
Also note that the unbiased estimators we proposed for TS automatically
scales up the estimate of the log-likelihood and its gradients to match that of
the full-network, whereas those obtained using conventional sampling
methods would depend on the size of the crawled sample.

Comparing the learning curves for the parameter estimates MAE with those of the
RMSE of predicted class probabilities as well as the classification accuracy,
we notice that the required sample size to achieve
reasonable prediction performance can be much smaller than the sample size
required to accurately estimate model parameters.
Figure~\ref{fig:estimation-acc} also shows that
more accurate parameter estimates do not necessarily translate to improved
classification accuracy, as in some cases biased parameter estimates may lead to better generalization performance.

\subsubsection{Evaluation of Calibration Performance}

Figure~\ref{fig:variance-rlr} 
shows the estimated bootstrap sampling distributions 
for several model parameters.
\footnote{For BFS, FF, RW, and MH, we perform bootstrapping directly on the
sampled nodes by treating each node as an \iid instance.}
We observe that TS is the only method consistently capturing the \true
parameter values.

To assess the calibration performance of each method,
we compute 95\% confidence intervals for the RLR parameters across 200 repeated trials, and calculate their empirical coverage probability (\ie, the proportion of trials in which
the estimated confidence interval contains the \true estimate)
as well as average interval width.
Table \ref{tab:bootstrap} shows the results when 15\% of each network have been
crawled.
We observe that the coverage probability for TS is higher than every other method across all datasets.%
\footnote{Notice that in some cases, even TS does not achieve the nominal
95\% coverage probability, possibly due to small errors in the parameter
estimates when the crawling proportion is relatively low.
Furthermore, regularization introduces an additional source of bias.
In particular, notice that the coverage probabilities of RLR-$\ell_1$ are significantly
lower than that of RLR-$\ell_2$ across all methods, which is due to the
shrinkage effect of the $\ell_1$ penalty that results in parameter estimates
with small values to be shrunk to exactly zero.}


\begin{figure*}[!htb]
\centering
\subfloat{
    \includegraphics[width=0.41\columnwidth,keepaspectratio]{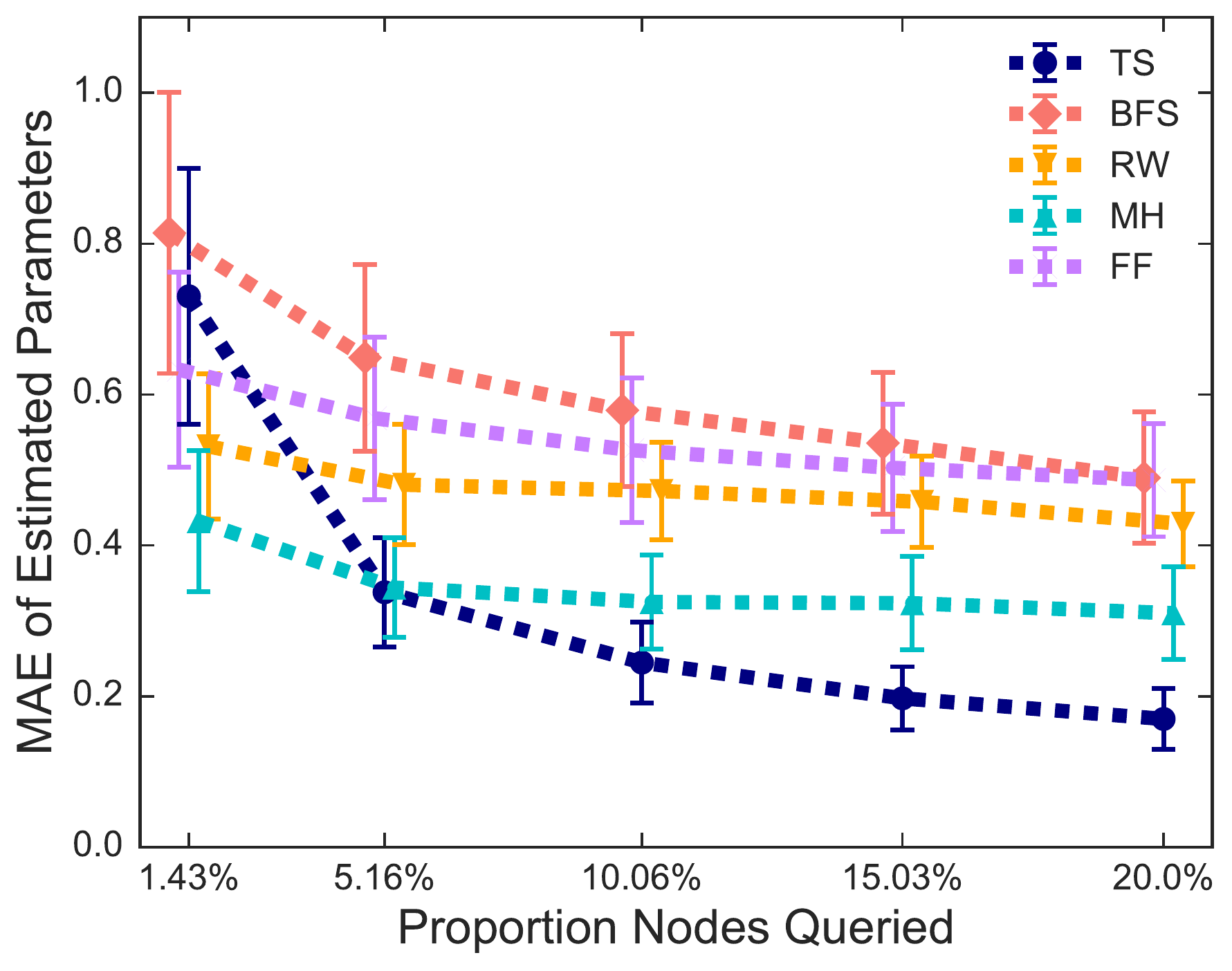}}
\hfill
\subfloat{
    \includegraphics[width=0.41\columnwidth,keepaspectratio]{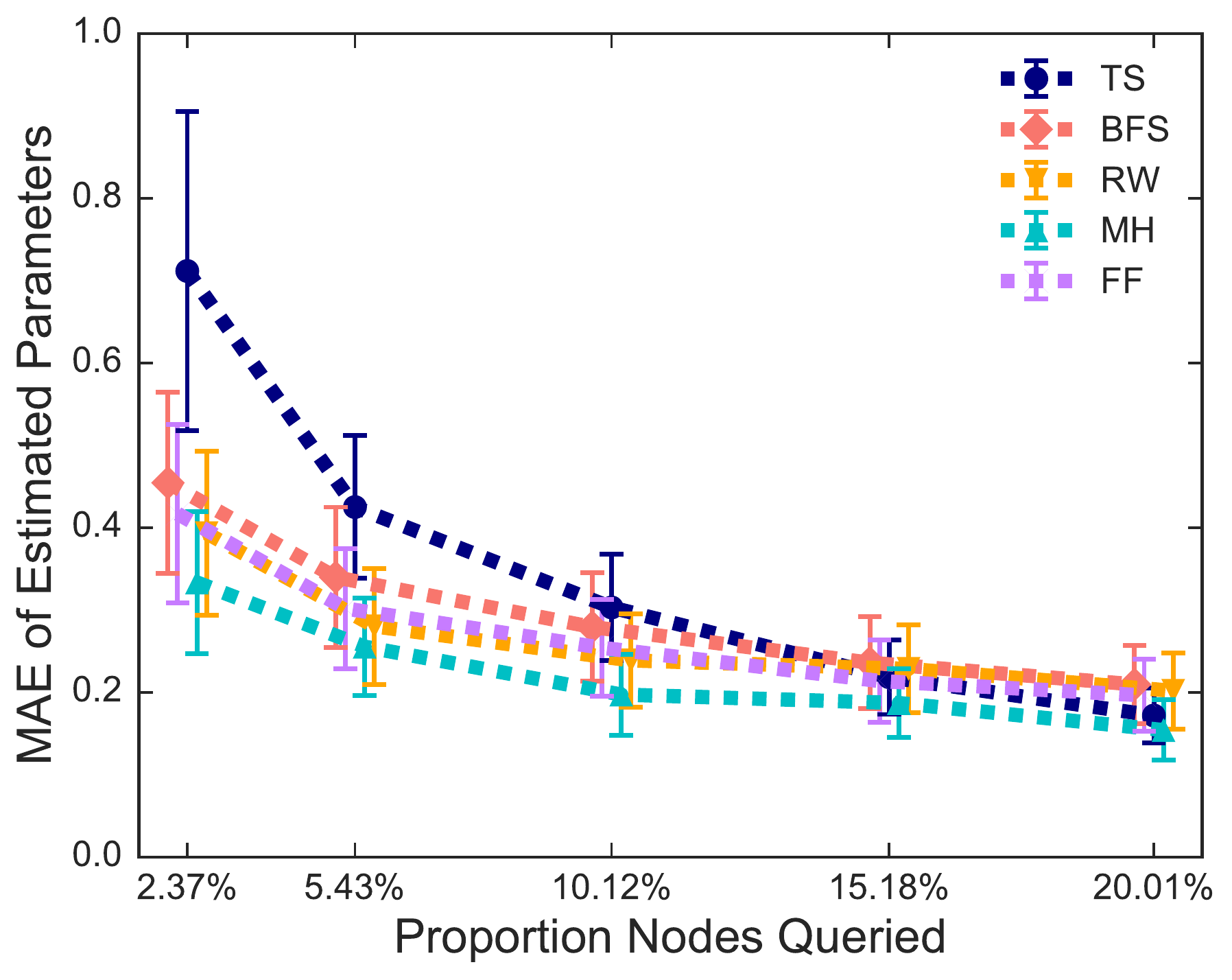}}
\hfill
\subfloat{
    \includegraphics[width=0.41\columnwidth,keepaspectratio]{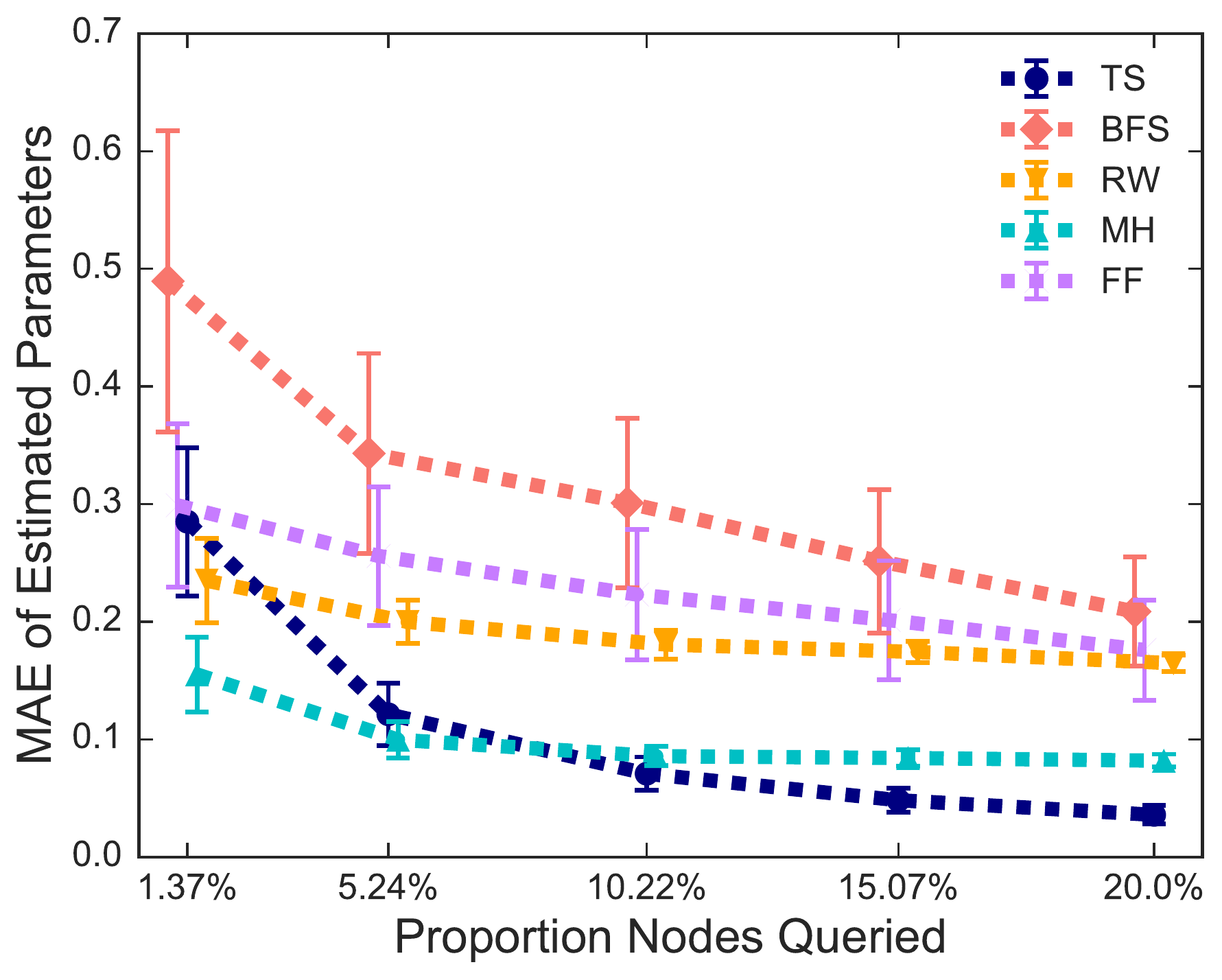}}
\hfill
\subfloat{
    \includegraphics[width=0.41\columnwidth,keepaspectratio]{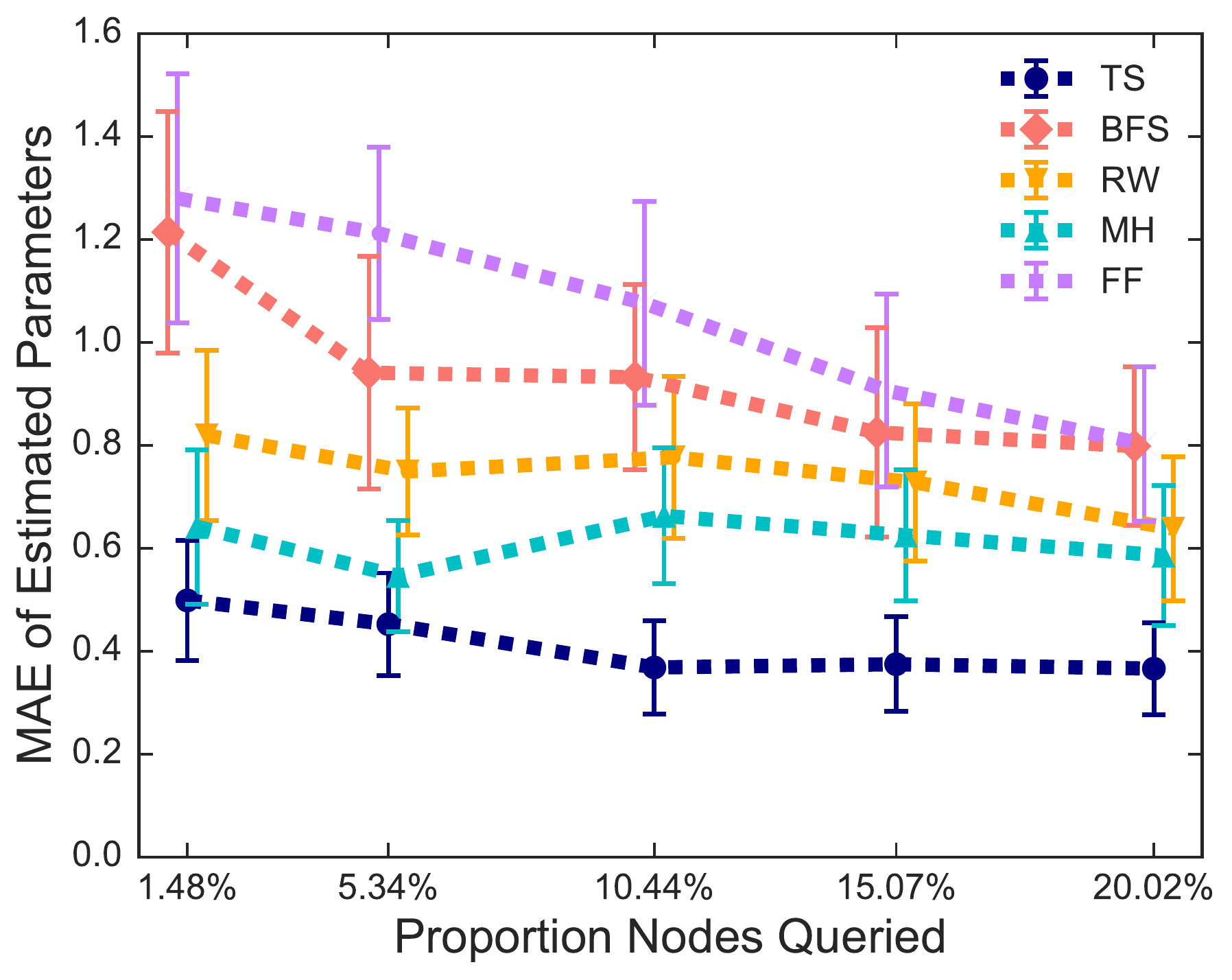}}
\subfloat{
    \includegraphics[width=0.41\columnwidth,keepaspectratio]{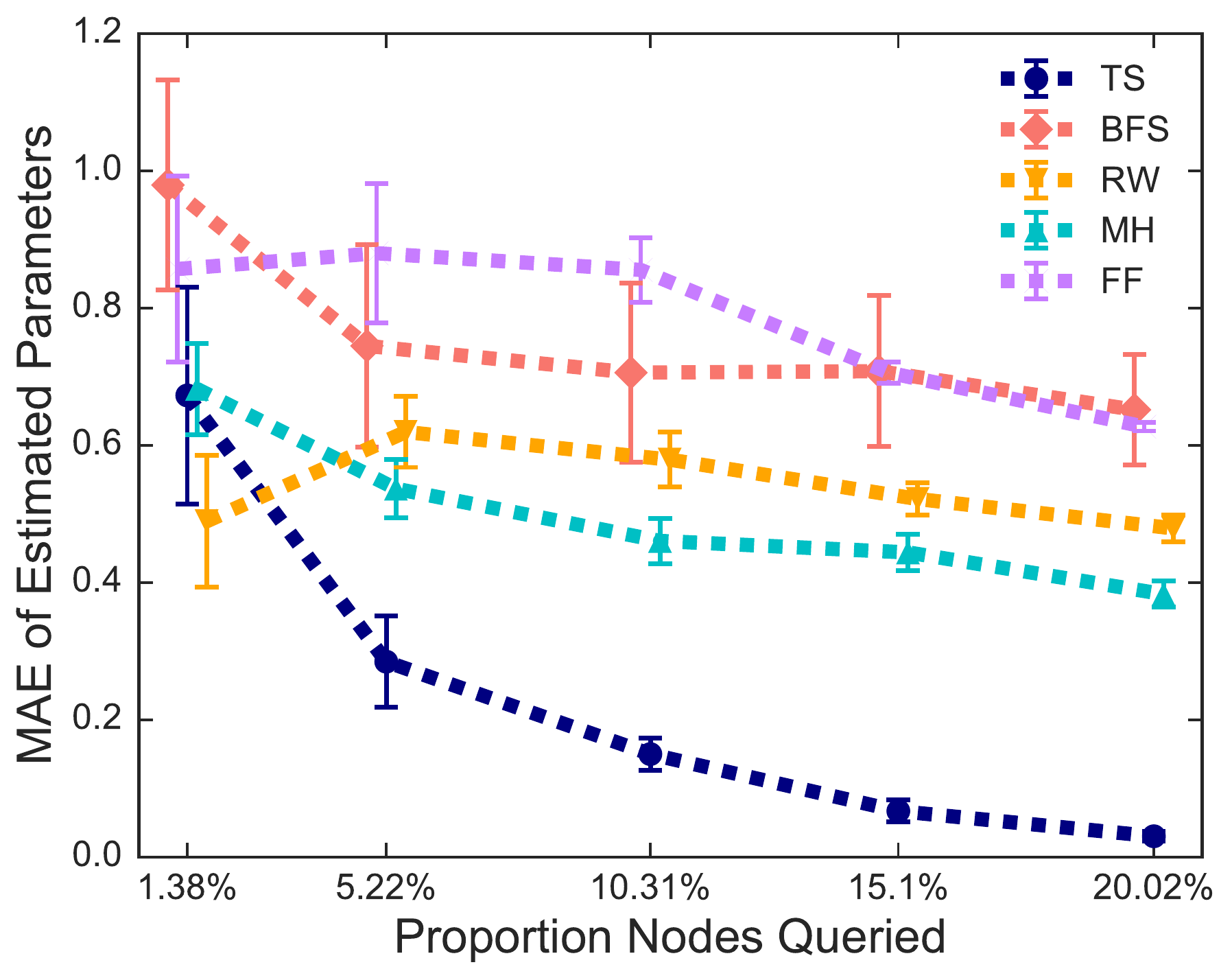}}
\vspace{-12pt}
\\
\addtocounter{subfigure}{-5}
\subfloat[Fri.-L. (Age); $\ell_1$]{
    \includegraphics[width=0.41\columnwidth,keepaspectratio]{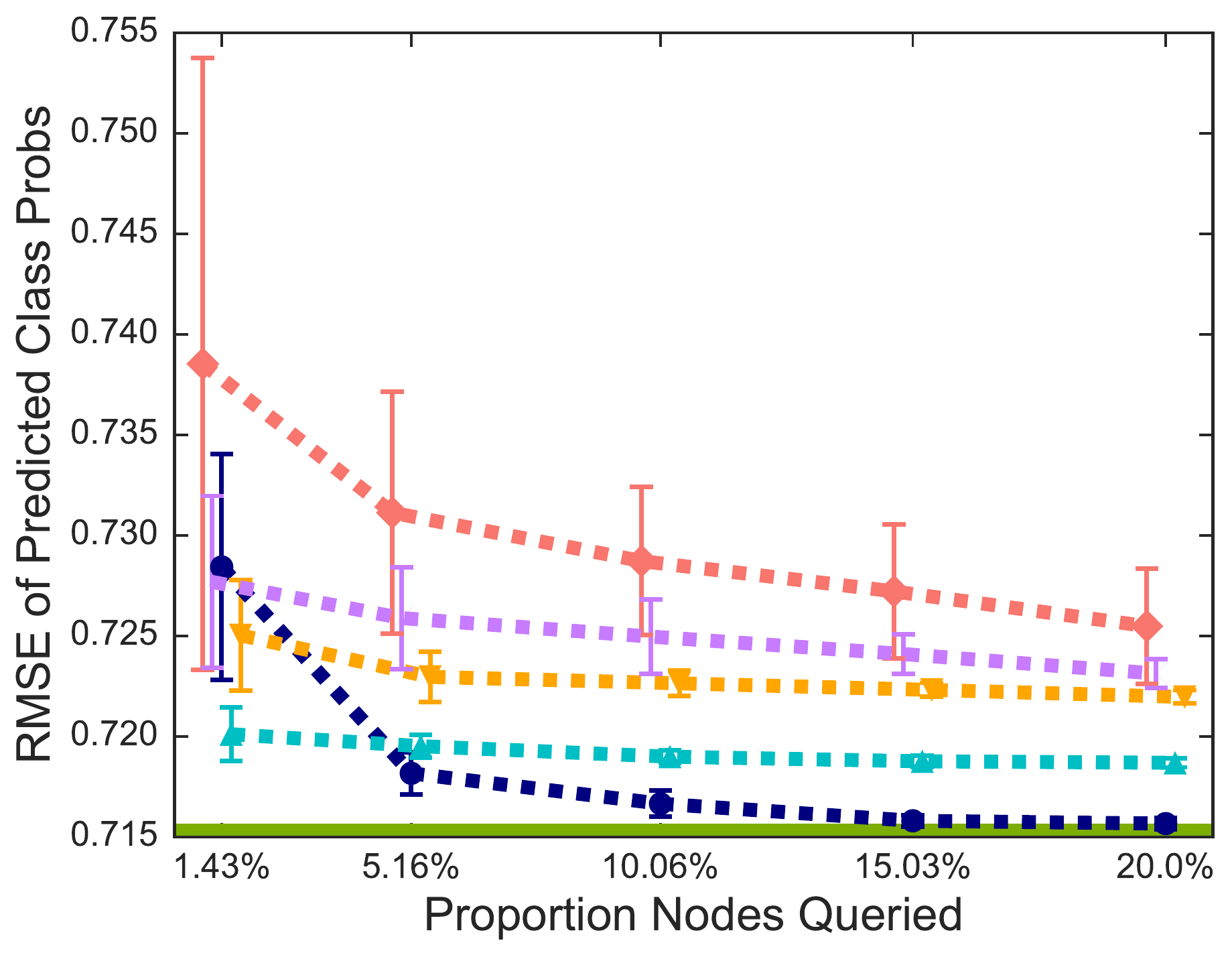}}
\hfill
\subfloat[Fri.-S. (Gender); $\ell_1$]{
    \includegraphics[width=0.41\columnwidth,keepaspectratio]{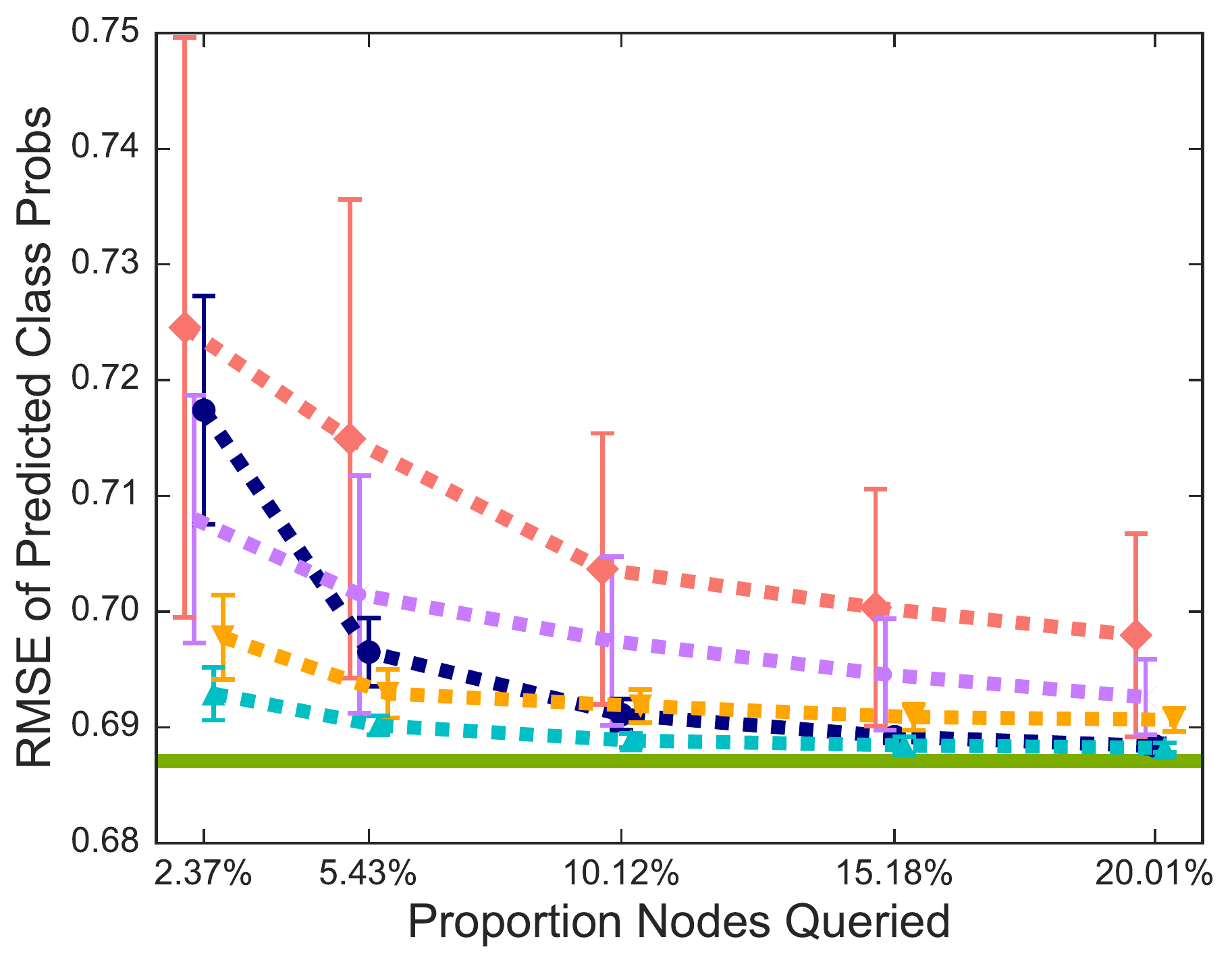}}
\hfill
\subfloat[Fri.-L. (Gender); $\ell_2$]{
    \includegraphics[width=0.41\columnwidth,keepaspectratio]{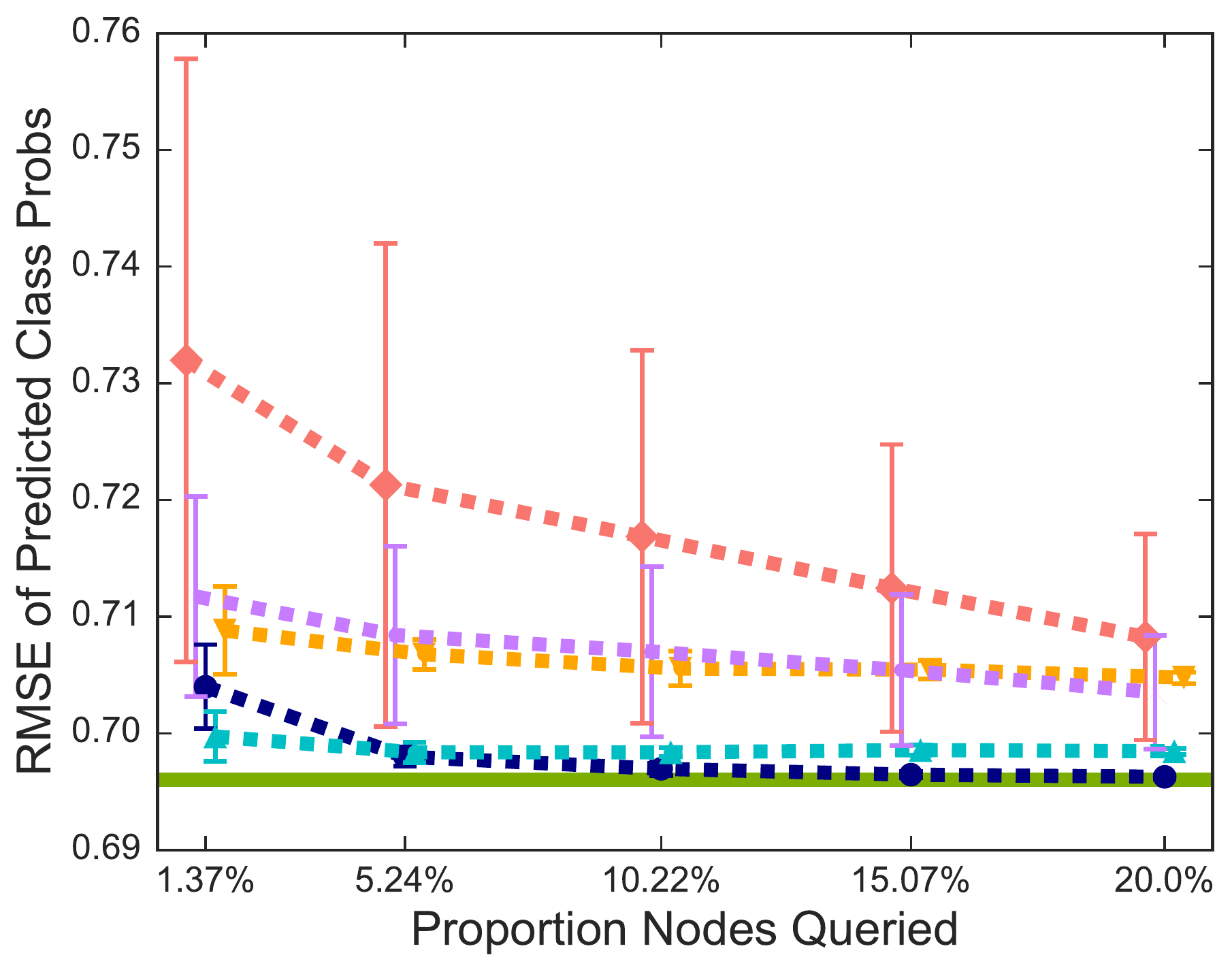}}
\hfill
\subfloat[Comm.; $\ell_1$]{
    \includegraphics[width=0.41\columnwidth,keepaspectratio]{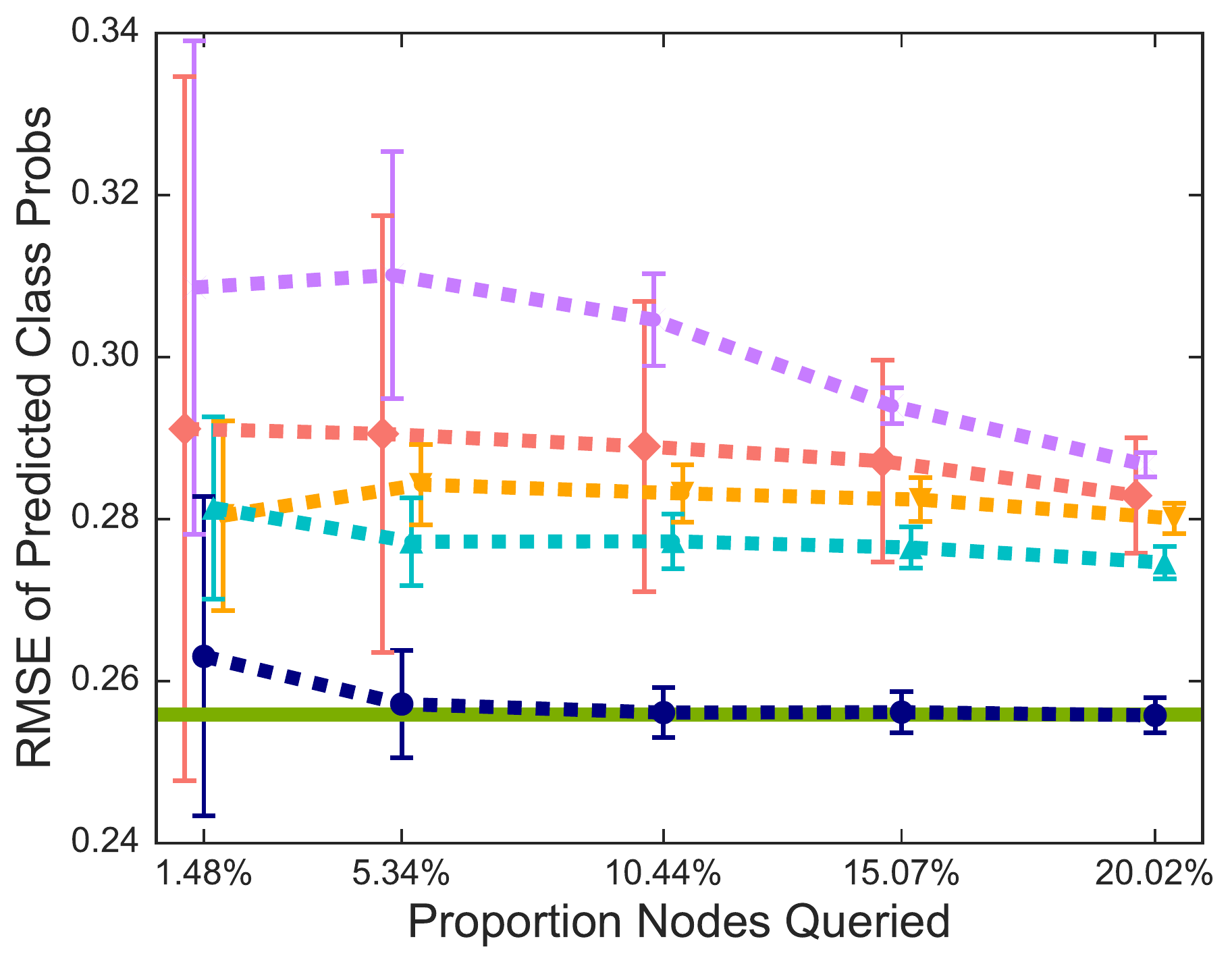}}
\hfill
\subfloat[Computers; $\ell_2$]{
    \includegraphics[width=0.41\columnwidth,keepaspectratio]{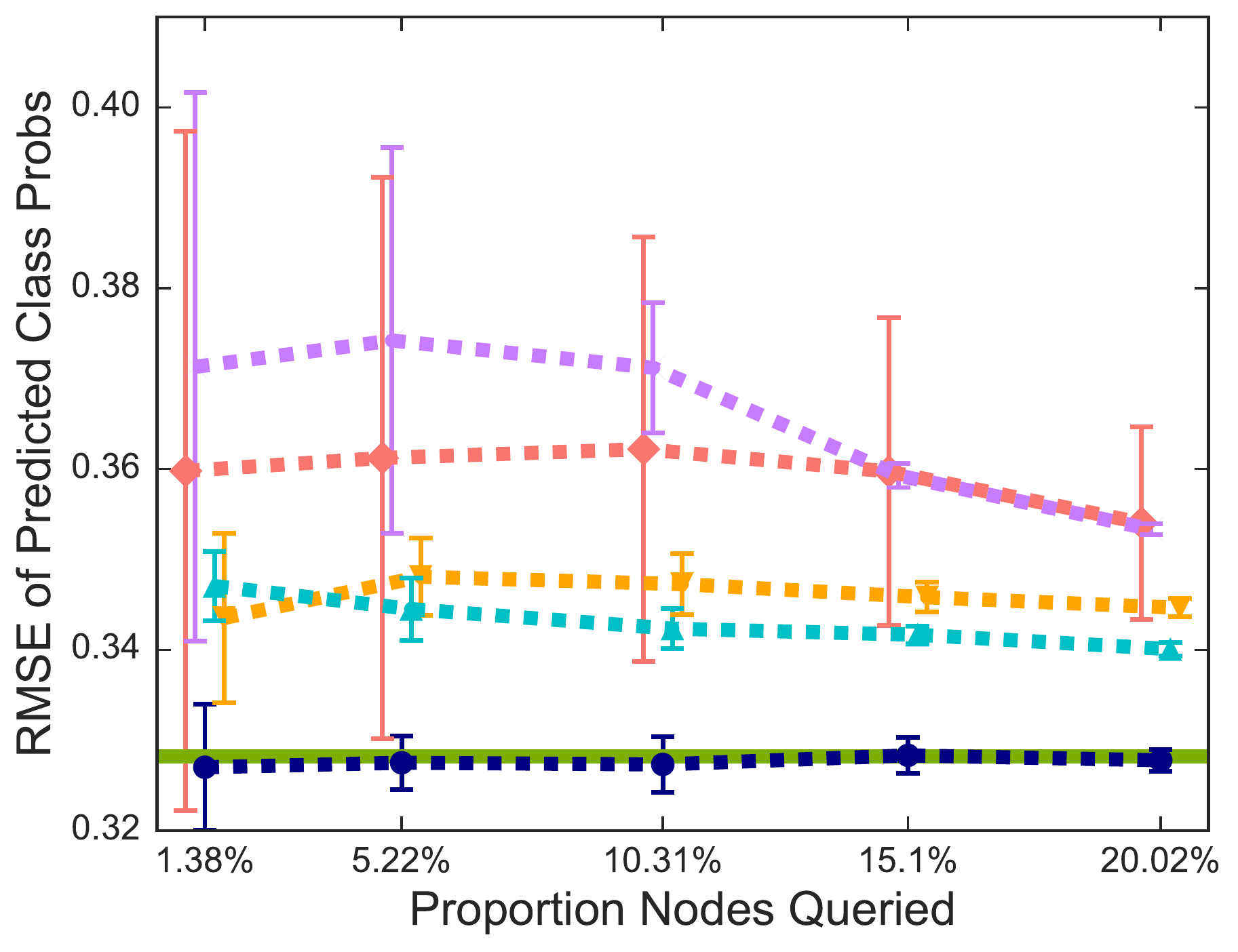}}
\vspace{-5pt}
\caption{MAE of estimated parameters (top row) and RMSE of predicted class probabilities (bottom row).}
\label{fig:estimation-rlr}
\end{figure*}

\begin{figure}[!htb]
\centering
\vspace{-5pt}
\subfloat[Fri.-L. (Age); $\ell_2$]{
    \includegraphics[width=0.49\columnwidth,keepaspectratio]{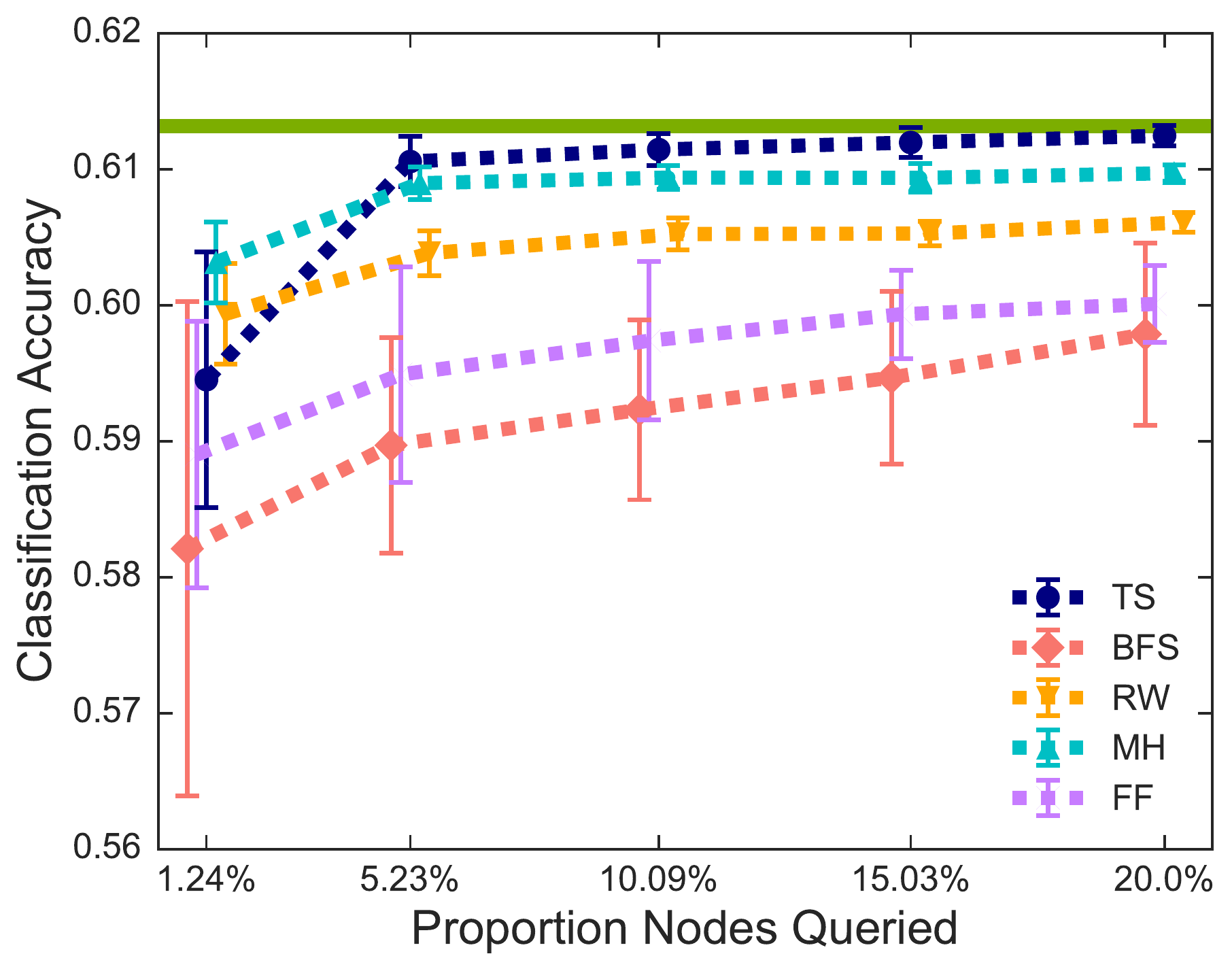}}
\subfloat[Fri.-L. (Gender); $\ell_2$]{
    \includegraphics[width=0.49\columnwidth,keepaspectratio]{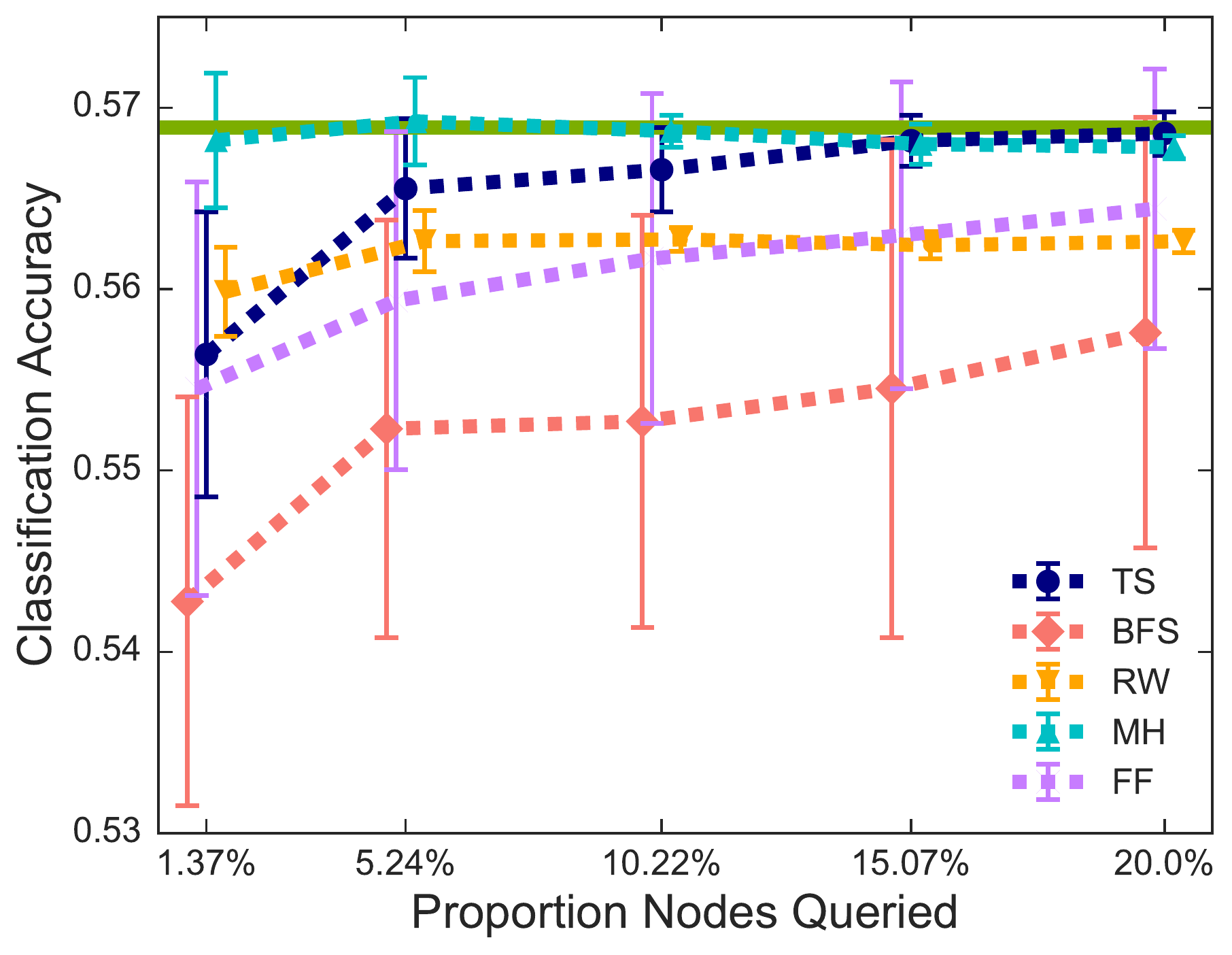}}
\hfil
\vspace{-5pt}
\caption{RLR classification accuracy.}
\label{fig:estimation-acc}
\vspace{-15pt}
\end{figure}

\begin{figure*}[!htb]
\centering
\vspace{-10pt}
\subfloat[Age in [16, 26) vs. proportion of neighbors in a relationship.]{
    \includegraphics[width=0.49\columnwidth,keepaspectratio]{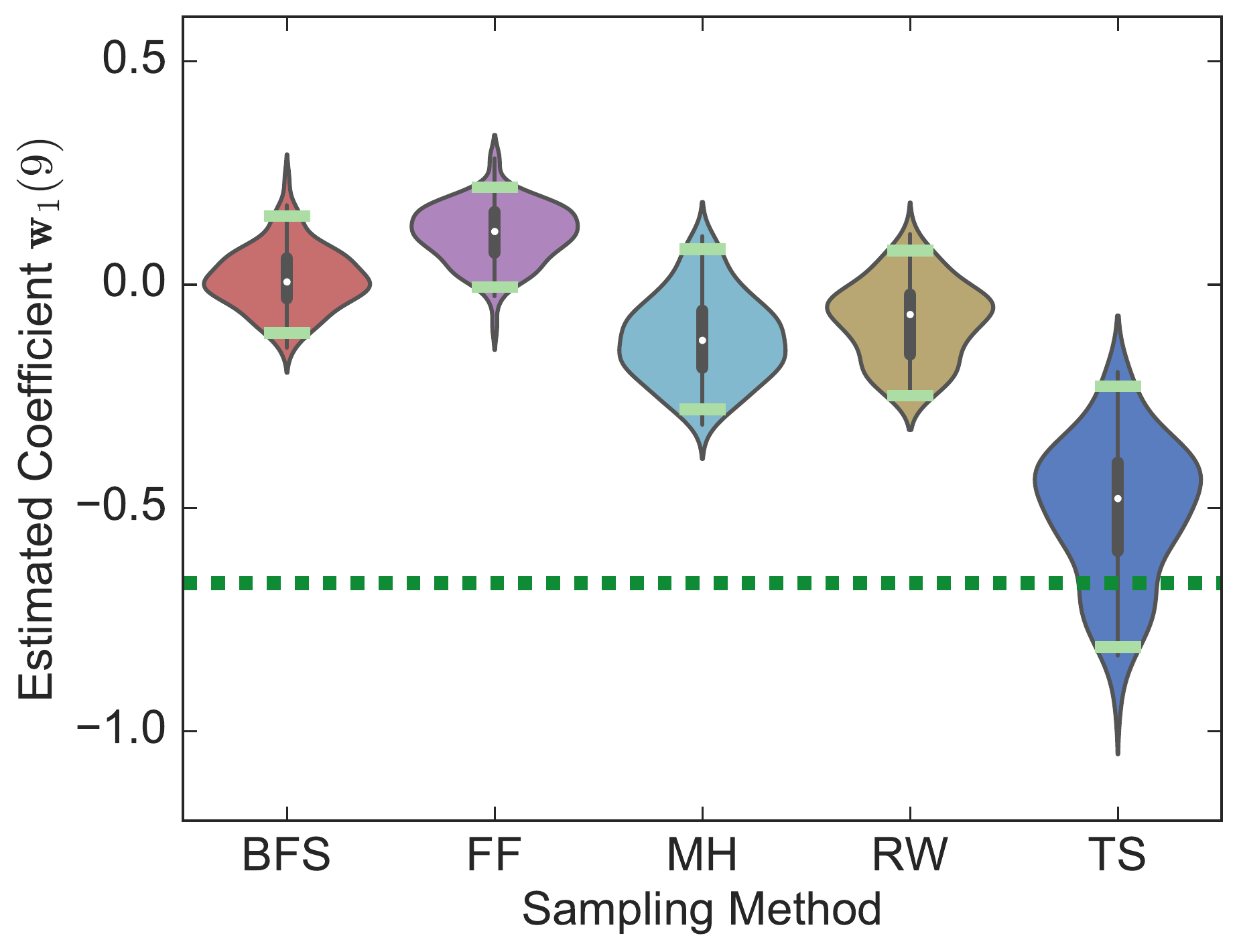}}
\hfil
\subfloat[Age in [16, 26) vs. proportion of married neighbors.]{
    \includegraphics[width=0.49\columnwidth,keepaspectratio]{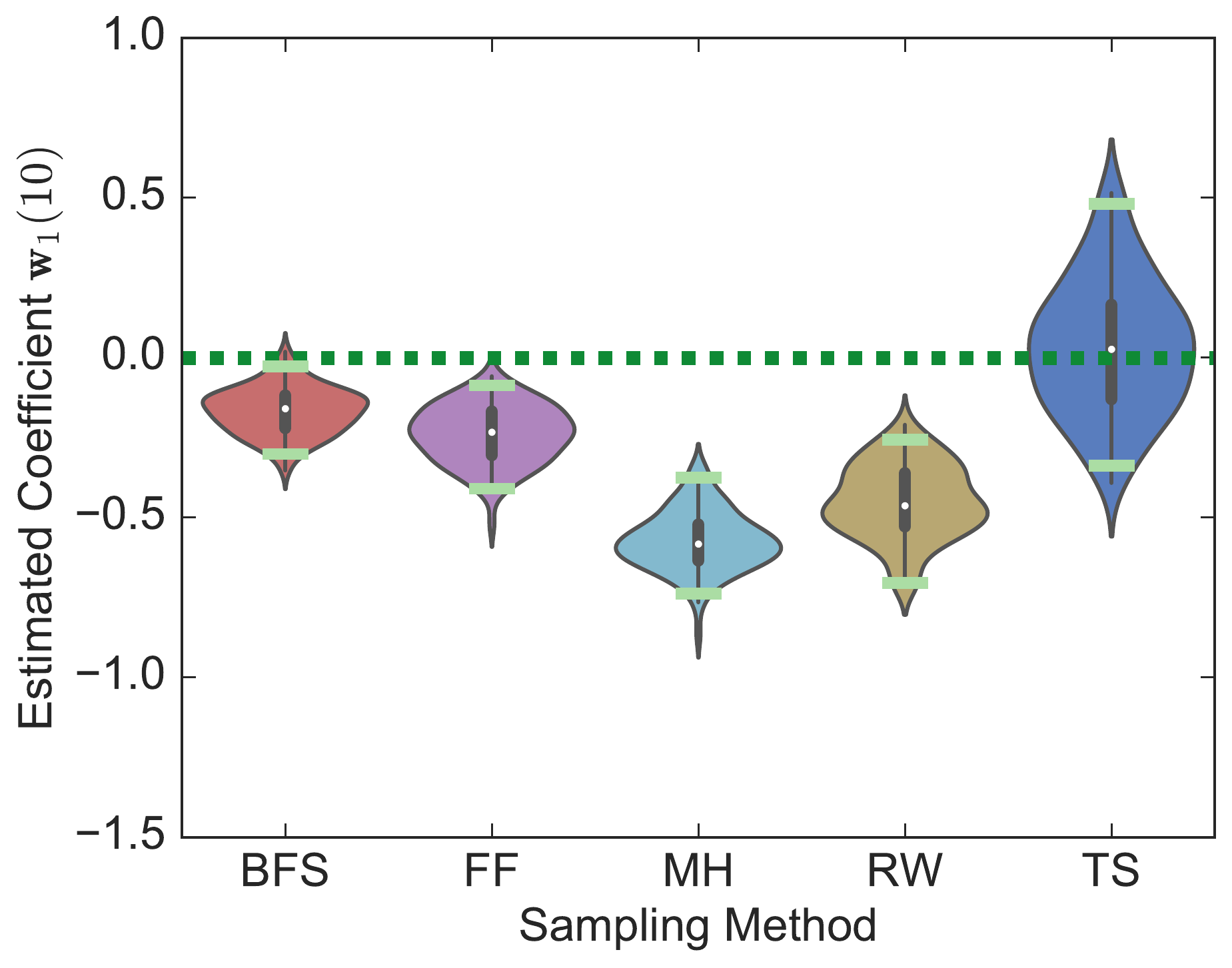}}
\hfil
\subfloat[Age in [26, 28) vs. proportion of neighbors in a relationship.]{
    \includegraphics[width=0.49\columnwidth,keepaspectratio]{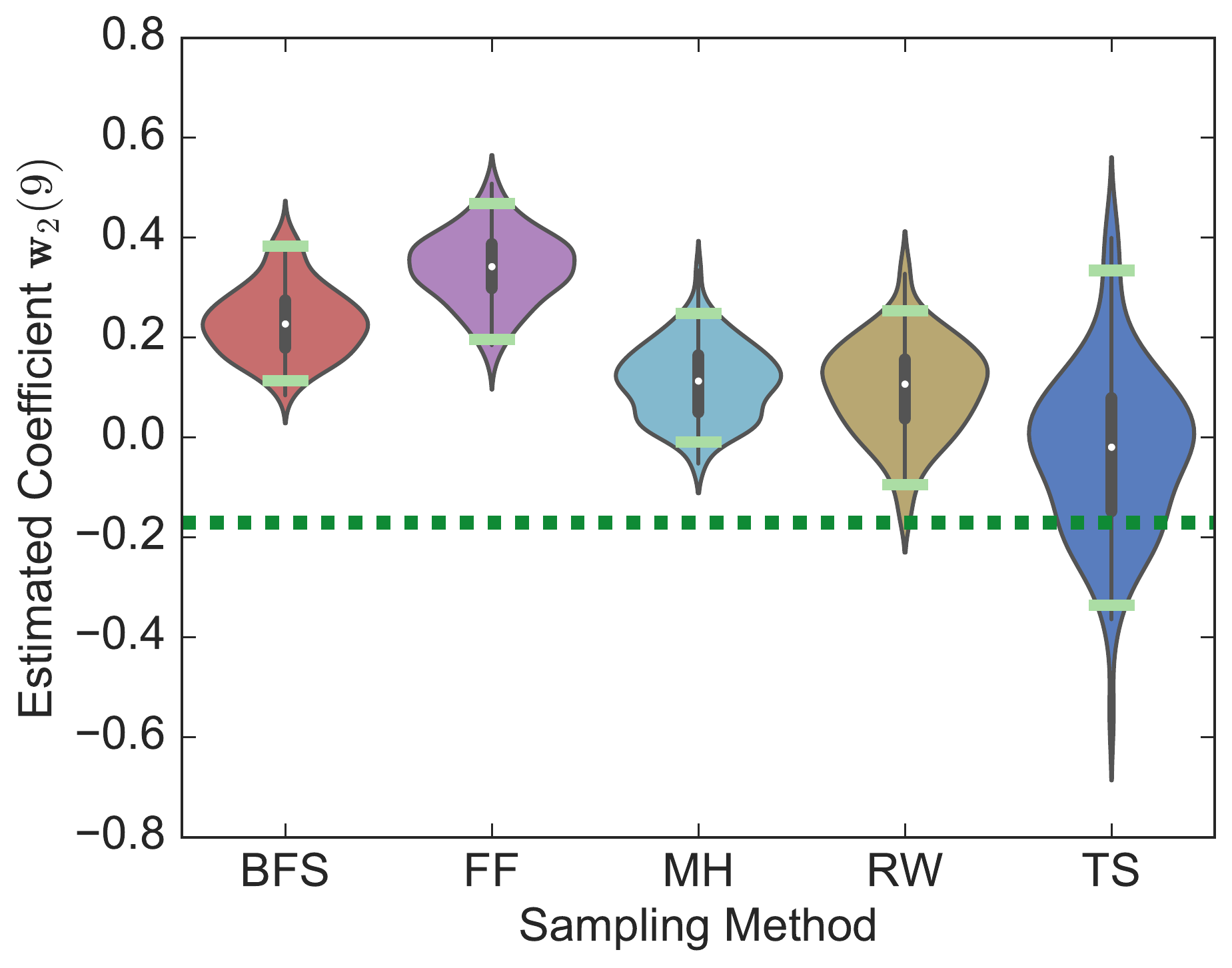}}
\hfil
\subfloat[Age in [32, 100) vs. proportion of neighbors with age in [16, 26).]{
    \includegraphics[width=0.50\columnwidth,keepaspectratio]{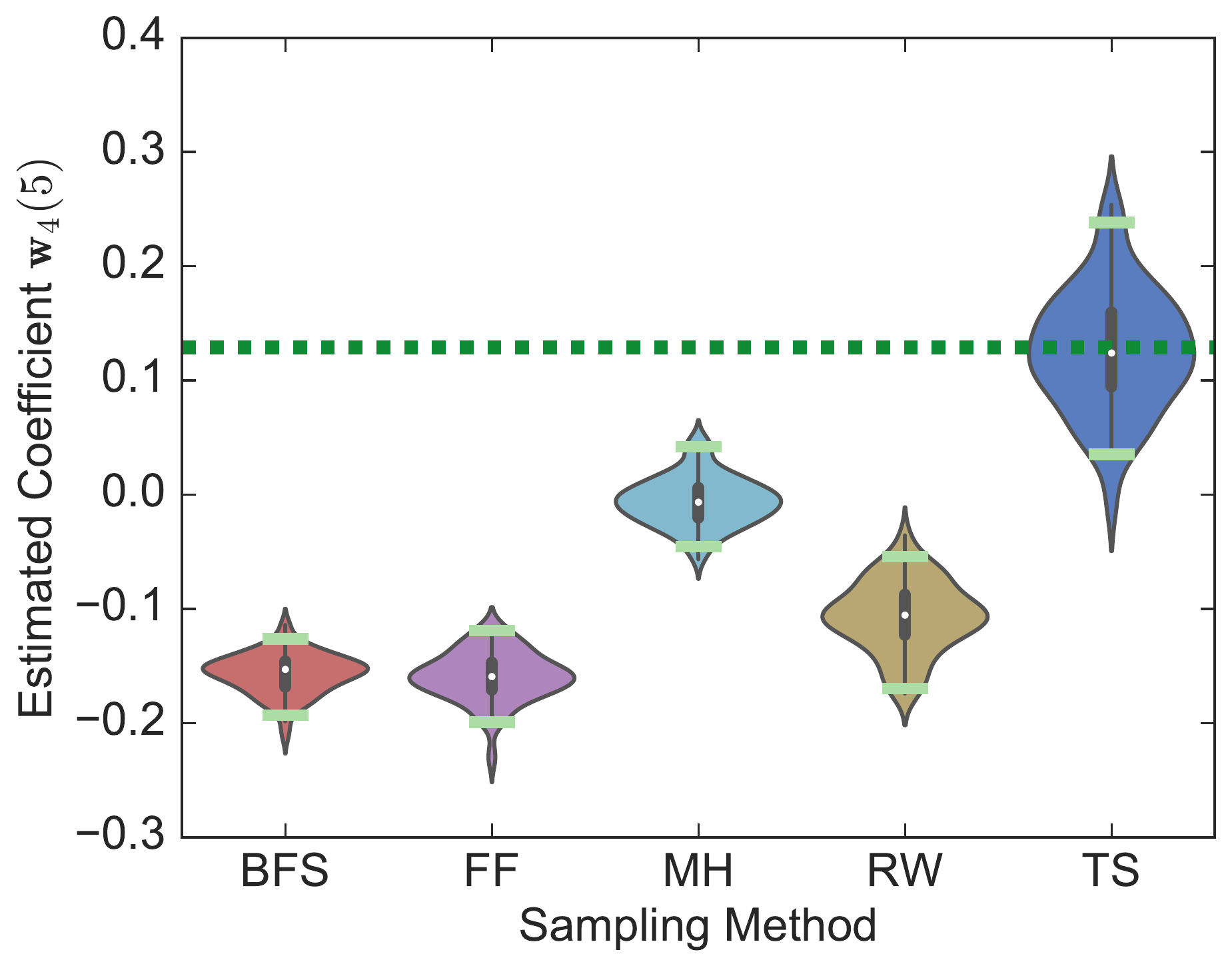}}
\vspace{-5pt}
\caption{Examples of estimated bootstrap sampling distributions using
RLR-$\ell_1$ on Friendster-Large (Age).}
\label{fig:variance-rlr}
\vspace{-10pt}
\end{figure*}


\begin{table*}[!htb]
\fontsize{8pt}{10pt}\selectfont
\centering
\caption{Coverage probability and average width of 95\%\,bootstrap confidence intervals on 15\%\,crawled network.}
\vspace{-8pt}
\label{tab:bootstrap}
\begin{tabular}{llccccc}
\toprule
& & \multicolumn{5}{c}{(Coverage Probability, Average Interval Width)} \\
\cmidrule(r){3-7}
Model & Dataset & BFS & FF & MH & RW & TS \\
\midrule
RLR-$\ell_1$ & Fri.-L. (Age) & (0.0842, 0.1849) & (0.0801, 0.1940) & (0.1887, 0.2439) & (0.1615, 0.2674) & (\textbf{0.7142}, 0.5313) \\
RLR-$\ell_1$ & Fri.-L. (Gender) & (0.0714, 0.0757) & (0.0741, 0.0986) & (0.1278, 0.0979) & (0.1019, 0.1177) & (\textbf{0.5239}, 0.2167) \\
RLR-$\ell_1$ & Fri.-L. (Status) & (0.1946, 0.2636) & (0.1470, 0.2500) & (0.4548, 0.3876) & (0.3179, 0.3577) & (\textbf{0.7479}, 0.7591) \\
RLR-$\ell_1$ & Fri.-S. (Gender) & (0.4825, 0.2523) & (0.4810, 0.2886) & (0.7307, 0.3941) & (0.6346, 0.4384) & (\textbf{0.7443}, 0.8247) \\
RLR-$\ell_1$ & Comm. & (0.0000, 0.1475) & (0.0000, 0.1205) & (0.0000, 0.1817) & (0.0000, 0.1888) & (\textbf{0.6240}, 0.4346) \\
RLR-$\ell_1$ & Computers & (0.0000, 0.1008) & (0.0000, 0.1034) & (0.0000, 0.1268) & (0.0000, 0.1206) & (\textbf{0.2500}, 0.3595) \\
RLR-$\ell_2$ & Fri.-L. (Age) & (0.1393, 0.1739) & (0.1912, 0.1897) & (0.3687, 0.2457) & (0.3392, 0.2590) & (\textbf{0.8321}, 0.5160) \\
RLR-$\ell_2$ & Fri.-L. (Gender) & (0.1429, 0.0824) & (0.1429, 0.0863) & (0.3641, 0.0977) & (0.3552, 0.1045) & (\textbf{0.9996}, 0.2282) \\
RLR-$\ell_2$ & Fri.-L. (Status) & (0.3090, 0.2576) & (0.3055, 0.2765) & (0.5339, 0.3391) & (0.4078, 0.3331) & (\textbf{0.8707}, 0.7261) \\
RLR-$\ell_2$ & Fri.-S. (Gender) & (0.5508, 0.2606) & (0.6001, 0.2858) & (0.6722, 0.3909) & (0.7502, 0.4220) & (\textbf{0.9568}, 0.6578) \\
RLR-$\ell_2$ & Comm. & (0.0000, 0.1464) & (0.0000, 0.1556) & (0.0000, 0.2026) & (0.0000, 0.2100) & (\textbf{1.0000}, 0.5279) \\
RLR-$\ell_2$ & Computers & (0.0000, 0.1099) & (0.0000, 0.1296) & (0.0000, 0.1522) & (0.0000, 0.1462) & (\textbf{1.0000}, 0.3480) \\
\bottomrule
\end{tabular}
\vspace{-10pt}
\end{table*}


\section{Conclusion and Future Work}\label{sec:conclusions}

In this work, we developed a stochastic gradient descent method for learning
relational logistic regression models from large-scale networks via partial
crawling.
We proved that the proposed method yields unbiased estimates of the
log-likelihood and its gradients over the full graph, and demonstrated how to
construct confidence intervals of the model parameters.
Our experiments showed that the proposed method produces more
accurate parameter estimates and confidence intervals compared to naively
learning models from data collected by existing crawlers.

\subsubsection{Estimation of Templated Relational Models}

One line of future work would be to extend our stochastic estimation
method to the family of \emph{templated relational models}, such as
\emph{relational Markov networks} (RMNs) \citep{TasAbbKol02},
\emph{Markov logic networks} (MLNs) \citep{DomingosRichardson04},
and \emph{relational dependency networks} (RDNs) \citep{NevJen07}
under the same crawl-based scenario examined in this work.

Notice that RLR can be viewed as a direct analog of MLNs where the weighted
logic formulas are used to define conditional instead of joint probabilities.
More generally, when the cliques in RMNs and MLNs are defined over connected subgraphs up to size three, we expect that the full log-likelihood can be
unbiasedly estimated by extending our proposed methodology.
However, learning RMNs and MLNs that contain higher-order clique structures
would be more challenging.

For RDNs, the estimation procedure depends on the specific local conditional model one employs.
Our previous work \citep{YanRibNev17} can be used to learn RDNs when the RBC is used
to model the local conditional probability distributions (CPDs),
and the current work has addressed the case when RLR is used as the local model
component.
Another possible choice for the local model consist of
\emph{relational probability trees} (RPTs) \citep{NevJenFriHay03}.
Similar to RLR, RPTs also utilize aggregation functions to construct node-centric features.
While these aggregated features can be stochastically estimated in the same way
as in RLR, learning the full tree structure becomes difficult due to the greedy partitioning procedure involved.

\subsubsection{Impact of Sampling on Collective Inference}

Finally, we note that in this work we utilize {\em non}-collective inference on the full graph---that is, when predicting an unlabeled node, we do not utilize the predictions made for its unlabeled neighbors, and instead treat their class labels as missing.
Although collective inference has been shown to improve classification accuracy, it also introduces \emph{inference error} \citep{XiangNeville11}.
In this work, we have mainly focused on examining
the impact of crawling on \emph{learning}, but future work should investigate
the more complex interplay between sampling, learning, and inference.

\section*{Acknowledgements}

This research is supported by NSF under contract numbers IIS-1149789,
IIS-1546488, IIS-1618690, and by the Army Research Laboratory under Cooperative Agreement Number W911NF-09-2-0053.

\appendix

\section{Proofs of Theorems}

\begin{proof}\textbf{of Theorem~\ref{thm:rlr}.}
It will be convenient to define the function
\displayfont{
\begin{equation}\label{eq:func}
  f(u,v) \defeq
    \begin{cases}
      \nicefrac{g(v)}{d_v} & \mbox{if $v\in V\backslash\Scal$;} \\
      0                & \mbox{if $v\in\Scal$.}
    \end{cases}
\end{equation}
}%
Recall that our goal is to estimate
\displayfont{
\begin{equation}\label{eq:full-est}
  \Lcal = \sum_{v\in V} g(v)
    = \sum_{v\in V \backslash \Scal} g(v) + \sum_{v\in \Scal} g(v)
    \defeq \Lcal^t + \Lcal^s,
\end{equation}
}%
where $\Lcal^s$ can be computed explicitly.
Notice that%
\footnote{Recall that for undirected graphs, the edge-set $E$ contains both
copies of each edge---\ie, $(u,v)\in E$ if and only if $(v,u)\in E$.}
\displayfont{
\begin{equation}\label{eq:estimand}
  \Lcal^t = \sum_{v\in V \backslash \Scal} g(v) = \sum_{(u,v)\in E} f(u, v).
\end{equation}
}%
Given the random walk tours
\displayfont{
$\Dcal_m(\Scal) = \{(v^{(k)}_1,\,\cdots,v^{(k)}_{\xi_k})\}_{k=1}^m$,
}%
we estimate $\Lcal^t$ using
$\widehat{\Lcal}^t
  \defeq \frac{1}{m} \sum_{k=1}^m \widehat{\Lcal}^t_k$,
where
\displayfont{
\begin{align}\label{eq:edge-est}
\widehat{\Lcal}^t_k
  & \defeq d_\Scal \sum_{t=2}^{\xi_k} f(v^{(k)}_{t-1}, v^{(k)}_t)
    = d_\Scal \sum_{t=2}^{\xi_k-1} \frac{g(v^{(k)}_t)}{d_{v^{(k)}_t}} \, .
\end{align}
}%
Here,
$d_{\Scal} = |((\Scal\times V) \cap E) \backslash (\Scal\times\Scal)|$ denotes
the total number of outgoing edges from the seed nodes,
and the last equality follows from Eq.\,\eqref{eq:func}.
Plugging the estimate into Eq.\,\eqref{eq:full-est},
we arrive at the full estimate of $\Lcal$:
\displayfont{
\begin{equation}\label{eq:loglik-est}
\widehat{\Lcal}
\defeq \widehat{\Lcal}^t + \Lcal^s
  = \frac{d_{\Scal}}{m}  \sum_{k=1}^m \sum_{t=2}^{\xi_k-1} \frac{g(v^{(k)}_t)}{d_{v^{(k)}_t}}
  + \sum_{v \in \cS} g(v) \, .
\end{equation}
}%
Moreover, the gradients of $\Lcal$ can be estimated by
\displayfont{
\begin{equation}\label{eq:loglikgrad-est}
\nabla_{\bw_j} \widehat{\Lcal} (\bw_1,\ldots,\bw_H)
= \frac{d_{\Scal} }{m} \sum_{k=1}^m  
\sum_{t=2}^{\xi_k-1} \frac{g'_j(v^{(k)}_t)}{d_{v^{(k)}_t}}
+ \sum_{v \in \cS} g'_j(v)\, .
\end{equation}
}%
To show that the estimates of Eqs.\,\eqref{eq:loglik-est} and
\eqref{eq:loglik-est} are unbiased, it suffices to show that for all $k$,
$\widehat{\Lcal}^t_k$ is an unbiased estimate of~$\Lcal^t$,
since the sample average $\widehat{\Lcal}^t$ will also be unbiased, and
the gradient is a linear operator.
More formally, we prove that for any $k=1,\,\cdots,m$, we have
%
\displayfont{
\begin{equation}
  \Ex{d_\Scal \sum_{t=2}^{\xi_k} f(v^{(k)}_{t-1}, v^{(k)}_t)} =
  \sum_{(u,v)\in E} f(u,v).
\end{equation}
}%
%
%
Notice that the random walk tour sampling algorithm 
(\cf Section~\ref{sec:background_sampling})
is equivalent to a conventional random walk conducted on
a virtual multi-graph $\widetilde{G}$ formed by treating all the seed nodes in $\Scal$ as a
single ``super-node'' while retaining all outgoing edges.
Thus, the degree of the super-node is $d_\Scal$.
The sampling process, viewed as a random walk on~$\tilde{G}$, constitutes a
\emph{renewal process} in which a renewal occurs when the walk returns to the
super-node (thereby completing a tour).
%
For the $k$-th tour
\displayfont{$(v^{(k)}_1,\,\cdots,v^{(k)}_{\xi_k})$},
define its \emph{reward} as
\displayfont{
$$Y_k \defeq \sum_{t=2}^{\xi_k} f(v^{(k)}_{t-1}, v^{(k)}_t)\,
    \ind{v^{(k)}_{t-1}=u, v^{(k)}_{t-1}=v},
$$
}%
where $u$ and $v$ are two adjacent vertices in $V \backslash \Scal$.
By the Markov property, both the 
tour lengths $\{\xi_k\}_{k=1}^m$ and
the rewards $\{Y_k\}_{k=1}^m$ are \iid sequences.
Let
\displayfont{
$$ N(i) \defeq \min\{n: \sum_{k=1}^n \xi_k \le i\} $$
}%
be the number of renewals (visits to the super-node) up to sampling the $i$-th node in the random walk.
Then the \emph{renewal reward theorem} %
 \cite[Chapter 3, Theorem 4.2]{Bremaud99} implies that
\displayfont{
\begin{equation}\label{eq:renewal-reward}
  \lim_{i\to\infty} \frac{\sum_{k=1}^{N(i)} Y_k}{i} = \frac{\Ex{Y_1}}{\Ex{\xi_1}}.
\end{equation}
}%
Let $\widetilde{E} = E \backslash (\Scal\times \Scal)$ be the edge-set of
$\widetilde{G}$.
The stationary probability of a random walk
on $\widetilde{G}$ are given by $\pi_v = \nicefrac{d_v}{|\widetilde{E}|}$,
and the transition probability from $u$
to $v$ is given by $P_{uv} = \nicefrac{1}{d_u}$.
Therefore,
\displayfont{
\begin{equation}\label{eq:asymp-freq}
  \lim_{i\to\infty} \frac{\sum_{k=1}^{N(i)} Y_k}{i} = \
    \pi_{u}\, P_{uv} f(u,v) = \frac{1}{|\widetilde{E}|} f(u,v),
\end{equation}
}%
and the mean recurrence time is
$\Ex{\xi_1} = 1/\pi_{\Scal} = |\widetilde{E}| / d_{\Scal}$.
Joining Eq.\,\eqref{eq:renewal-reward} and Eq.\,\eqref{eq:asymp-freq},
and multiplying by $|\widetilde{E}|$ on both sides, we have that
\displayfont{
\begin{equation*}
 d_\Scal\, \Ex{\sum_{t=2}^{\xi_1} f(v^{(1)}_{t-1}, v^{(1)}_t)\,
      \ind{v^{(1)}_{t-1}=u, v^{(1)}_{t}=v}}
  = f(u,v),
\end{equation*}
}%
Finally, taking the sum over all $(u,v)\in E$ yields Eq.\,\eqref{eq:edge-est},
which concludes our proof.
\end{proof}


\bibliographystyle{aaai}

\fontsize{9pt}{10pt}\selectfont
\bibliography{paper}

\begin{thebibliography}{}

\bibitem[\protect\citeauthoryear{Agresti}{2002}]{Agresti02}
Agresti, A.
\newblock 2002.
\newblock {\em Categorical data analysis}.
\newblock Wiley Series in Probability and Statistics. John Wiley.

\bibitem[\protect\citeauthoryear{Avrachenkov, Ribeiro, and
  Sreedharan}{2016}]{AvrRibSre15}
Avrachenkov, K.; Ribeiro, B.; and Sreedharan, J.~K.
\newblock 2016.
\newblock Inference in {OSNs} via lightweight partial crawls.
\newblock In {\em Proceedings of the ACM SIGMETRICS International Conference on
  Measurement and Modeling of Computer Science},  165--177.

\bibitem[\protect\citeauthoryear{Bach}{2014}]{bach2014adaptivity}
Bach, F.
\newblock 2014.
\newblock Adaptivity of averaged stochastic gradient descent to local strong
  convexity for logistic regression.
\newblock {\em Journal of Machine Learning Research} 15(1):595--627.

\bibitem[\protect\citeauthoryear{Bottou and Le~Cun}{2005}]{bottou2005line}
Bottou, L., and Le~Cun, Y.
\newblock 2005.
\newblock On-line learning for very large data sets.
\newblock {\em Applied Stochastic Models in Business and Industry}
  21(2):137--151.

\bibitem[\protect\citeauthoryear{Br\'{e}maud}{1999}]{Bremaud99}
Br\'{e}maud, P.
\newblock 1999.
\newblock {\em Markov chains : Gibbs fields, Monte Carlo simulation and
  queues}.
\newblock Texts in Applied Mathematics. Springer.

\bibitem[\protect\citeauthoryear{Domingos and
  Richardson}{2004}]{DomingosRichardson04}
Domingos, P., and Richardson, M.
\newblock 2004.
\newblock Markov logic: A unifying framework for statistical relational
  learning.
\newblock In {\em {Proceedings of the ICML-2004 Workshop on Statistical
  Relational Learning and its Connections to Other Fields}},  49--54.

\bibitem[\protect\citeauthoryear{Efron}{1979}]{Efron79}
Efron, B.
\newblock 1979.
\newblock Bootstrap methods: Another look at the {J}ackknife.
\newblock {\em Annals of Statistics} 7(1):1--26.

\bibitem[\protect\citeauthoryear{Getoor and Taskar}{2007}]{GetoorTaskar07}
Getoor, L., and Taskar, B.
\newblock 2007.
\newblock {\em {Introduction to Statistical Relational Learning}}.
\newblock The MIT Press.

\bibitem[\protect\citeauthoryear{Gjoka \bgroup et al\mbox.\egroup
  }{2010}]{GjoKurButMar09}
Gjoka, M.; Kurant, M.; Butts, C.~T.; and Markopoulou, A.
\newblock 2010.
\newblock Walking in {F}acebook: {A} case study of unbiased sampling of osns.
\newblock In {\em Proceedings of the 29th Conference on Information
  Communications (INFOCOM)},  2498--2506.

\bibitem[\protect\citeauthoryear{Hall, Jaffe, and
  Trajtenberg}{2001}]{HalJafTra01}
Hall, B.~H.; Jaffe, A.; and Trajtenberg, M.
\newblock 2001.
\newblock The {NBER} patent citation data file: Lessons, insights and
  methodological tools.

\bibitem[\protect\citeauthoryear{Kazemi \bgroup et al\mbox.\egroup
  }{2014}]{KazBucKerNat14}
Kazemi, S.~M.; Buchman, D.; Kersting, K.; Natarajan, S.; and Poole, D.
\newblock 2014.
\newblock Relational logistic regression.
\newblock In {\em Proceedings of the 14th International Conference on the
  Principles of Knowledge Representation and Reasoning (KR)},  548--557.

\bibitem[\protect\citeauthoryear{Kurant, Markopoulou, and
  Thiran}{2011}]{kurant2011towards}
Kurant, M.; Markopoulou, A.; and Thiran, P.
\newblock 2011.
\newblock Towards unbiased {BFS} sampling.
\newblock {\em IEEE Journal on Selected Areas in Communications}
  29(9):1799--1809.

\bibitem[\protect\citeauthoryear{Leskovec and Faloutsos}{2006}]{LesFal06}
Leskovec, J., and Faloutsos, C.
\newblock 2006.
\newblock Sampling from large graphs.
\newblock In {\em Proceedings of the 12th ACM SIGKDD International Conference
  on Knowledge Discovery and Data Mining (KDD)},  631--636.

\bibitem[\protect\citeauthoryear{Macskassy and Provost}{2007}]{MacPro07}
Macskassy, S.~A., and Provost, F.~J.
\newblock 2007.
\newblock {Classification in Networked Data: A Toolkit and a Univariate Case
  Study}.
\newblock {\em Journal of Machine Learning Research} 8:935--983.

\bibitem[\protect\citeauthoryear{Mouli \bgroup et al\mbox.\egroup
  }{2017}]{MouNaiRibNev17}
Mouli, S.~C.; Naik, A.; Ribeiro, B.; and Neville, J.
\newblock 2017.
\newblock Identifying user survival types via clustering of censored social
  network data.
\newblock {\em arXiv:1703.03401}.

\bibitem[\protect\citeauthoryear{Neville and Jensen}{2007}]{NevJen07}
Neville, J., and Jensen, D.
\newblock 2007.
\newblock {Relational Dependency Networks}.
\newblock {\em Journal of Machine Learning Research} 8:653--692.

\bibitem[\protect\citeauthoryear{Neville \bgroup et al\mbox.\egroup
  }{2003}]{NevJenFriHay03}
Neville, J.; Jensen, D.~D.; Friedland, L.; and Hay, M.
\newblock 2003.
\newblock Learning relational probability trees.
\newblock In {\em Proceedings of the 9th ACM SIGKDD International Conference on
  Knowledge Discovery and Data Mining (KDD)},  625--630.

\bibitem[\protect\citeauthoryear{Neville, Jensen, and
  Gallagher}{2003}]{NevJenGal03}
Neville, J.; Jensen, D.; and Gallagher, B.
\newblock 2003.
\newblock Simple estimators for relational {B}ayesian classifers.
\newblock In {\em Proceedings of the 3rd IEEE International Conference on Data
  Mining (ICDM)},  609--612.

\bibitem[\protect\citeauthoryear{Ribeiro and Towsley}{2012}]{Ribeiro2012}
Ribeiro, B., and Towsley, D.
\newblock 2012.
\newblock On the estimation accuracy of degree distributions from graph
  sampling.
\newblock In {\em Proceedings of the 51st IEEE Conference on Decision and
  Control (CDC)},  5240--5247.

\bibitem[\protect\citeauthoryear{Robbins and Monro}{1951}]{Robbins1951}
Robbins, H., and Monro, S.
\newblock 1951.
\newblock A stochastic approximation method.
\newblock {\em Annals of Mathematical Statistics} 22(3):400--407.

\bibitem[\protect\citeauthoryear{Taskar, Abbeel, and
  Koller}{2002}]{TasAbbKol02}
Taskar, B.; Abbeel, P.; and Koller, D.
\newblock 2002.
\newblock Discriminative probabilistic models for relational data.
\newblock In {\em Proceedings of the 18th Conference on Uncertainty in
  Artificial Intelligence (UAI)},  485--492.

\bibitem[\protect\citeauthoryear{Xiang and Neville}{2011}]{XiangNeville11}
Xiang, R., and Neville, J.
\newblock 2011.
\newblock Understanding propagation error and its effect on collective
  classification.
\newblock In {\em Proceedings of the 11th IEEE International Conference on Data
  Mining (ICDM)},  834--843.

\bibitem[\protect\citeauthoryear{Yang, Ribeiro, and
  Neville}{2017}]{YanRibNev17}
Yang, J.; Ribeiro, B.; and Neville, J.
\newblock 2017.
\newblock Should we be confident in peer effects estimated from partial crawls
  of social networks?
\newblock In {\em Proceedings of the 11th International AAAI Conference on Web
  and Social Media (ICWSM)},  708--711.

\end{thebibliography}

\end{document}